\newcommand{\cmark}{\ding{51}}
\newcommand{\xmark}{\ding{55}}
\DeclareMathOperator{\Supp}{Supp}
\DeclareMathOperator{\rank}{rank}
\DeclareMathOperator{\sym}{Sym}
\newcommand{\abs}[1]{\left\lvert #1 \right\rvert}
\newcommand{\paran}[1]{\left( #1 \right)}
\newcommand{\zeronorm}[1]{\left\lVert #1 \right\rVert_{0}}
\newcommand{\zinorm}[1]{\left\lVert #1 \right\rVert_{0,\infty}}
\newcommand{\iznorm}[1]{\left\lVert #1 \right\rVert_{\infty,0}}
\newcommand{\onenorm}[1]{\left\lVert #1 \right\rVert_{1}}
\newcommand{\twonorm}[1]{\left\lVert #1 \right\rVert_{2}}
\newcommand{\trnorm}[1]{\left\lVert #1 \right\rVert_{*}}
\newcommand{\frobnorm}[1]{\left\lVert #1 \right\rVert_{F}}
\newcommand{\infnorm}[1]{\left\lVert #1 \right\rVert_{\infty}}
\newcommand{\nn}{\nonumber}
\newtheorem{claim}{Claim}[section]
\newtheorem{lemma}{Lemma}[section]
\newtheorem{theorem}{Theorem}[section]
\newtheorem{remark}{Remark}[section]
\newtheorem{example}{Example}[section]
\newcommand{\R}{\mathbb{R}}
\renewcommand{\P}{\mathcal{P}}
\title{Provable Inductive Robust PCA via Iterative Hard Thresholding}
\author{ {\bf U.N. Niranjan\thanks{Part of work done while at the University of California Irvine and at Xerox Research Centre India.}} \\
Microsoft Corporation \\
niranjan.uma$@$microsoft.com \\
\And
{\bf Arun Rajkumar}\thanks{Part of work done while at Xerox Research Centre India.}  \\
Conduent Labs India \\
arun.rajkumar$@$conduent.com \\
\And
{\bf Theja Tulabandhula}\thanks{Part of work done while at Xerox Research Centre India.}   \\
University of Illinois Chicago \\
tt$@$theja.org
}
\begin{document}

\maketitle

\begin{abstract}
The robust PCA problem, wherein, given an input data matrix that is the superposition of a low-rank matrix and a sparse matrix, we aim to separate out the low-rank and sparse components, is a well-studied problem in machine learning. One natural question that arises is that, as in the inductive setting, if features are provided as input as well, can we hope to do better? Answering this in the affirmative, the main goal of this paper is to study the robust PCA problem while incorporating feature information. In contrast to previous works in which recovery guarantees are based on the convex relaxation of the problem, we propose a simple iterative algorithm based on hard-thresholding of appropriate residuals. Under weaker assumptions than previous works, we prove the global convergence of our iterative procedure; moreover, it admits a much faster convergence rate and lesser computational complexity per iteration. In practice, through systematic synthetic and real data simulations, we confirm our theoretical findings regarding improvements obtained by using feature information.
\end{abstract}

\section{INTRODUCTION}
\textit{Principal Component Analysis (PCA)}~\citep{pearson1901liii} is a very fundamental and ubiquitous technique for unsupervised learning and dimensionality reduction; basically, this involves finding the best low-rank approximation to the given data matrix. To be precise, one common formulation of PCA is the following:
\begin{equation}
\label{eqn:pca}
\widehat{L} = \arg \min_{L} \frobnorm{M-L} \quad \text{ s.t. } \rank(L) \leq r
\end{equation}
where $M \in \R^{n_1 \times n_2}$ is the input data matrix, where $\frobnorm{.}$ denotes the Frobenius norm of a matrix and $1 \leq r \leq \min(n_1, n_2)$. It is well-known that the constrained optimization problem given by Equation~\eqref{eqn:pca} can be solved via the Singular Value Decomposition (SVD) and truncating the resultant decomposition to the top-$r$ singular values and singular vectors yields the optimal solution~\citep{eckart1936approximation}. While this machine learning technique has umpteen number of applications, one of its main shortcomings is that it is not robust to the presence of gross outliers since the optimization involves just an $\ell_2$ objective. To address this issue, the \textit{robust PCA} technique -- given $M$ such that $M = L^*+S^*$, our aim is to find $L^*$ and $S^*$ which are low-rank and sparse matrix components respectively -- was developed. Precisely, one hopes to solve the following problem (or its equivalent formulations):
\begin{align}
\label{eqn:rpca}
\{ \widehat{L}, \widehat{S} \} = \arg \min_{L,S} & \frobnorm{M-L-S} \nn \\
& \text{ s.t. } \rank(L) \leq r, \quad \zeronorm{S} \leq z_0
\end{align}
where $\zeronorm{.}$ denotes the number of non-zero entries in a matrix, $0 \leq r \leq \min(n_1,n_2)$ and $0 \leq z_0 \leq n_1 n_2$. While Equation~\eqref{eqn:rpca} may not be always well-posed, under certain identifiability conditions, many recent works over the past decade have advanced our understanding of this problem; we briefly recap some of the existing relevant results in Section~\ref{sec:rel}.

\begin{table*}[t]
\centering
\caption{Comparison of this work to previous robust PCA works. For simplicity and brevity, we let $n_1 = n_2 = n$ and $d_1 = d_2 = d$; let the number of non-zeros per row/column of $S^*$ be $z$ and the number of non-zero entries in $S^*$ be $m$; we use $\widetilde{O}$ to suppress $\log$ factors. Note that we consider the practically important regime of $d \ll n$.}
\label{tab:summary}
\resizebox{\textwidth}{!}{
\begin{tabular}{ccccccc}
\hlineB{2}
\textbf{Work} & Features & Approach & Incoherence & Sparsity & Corruption & Comp. complexity \\
\hlineB{2}
\citep{candes2011robust} & \xmark & Convex & Strong & Random & $m = O(n^2), r = \widetilde{O}(n)$ & $O(\frac{n^3}{\sqrt{\epsilon}})$ \\
\citep{hsu2011robust} & \xmark & Convex & Weak & Deterministic & $z = O(\frac{n}{r})$ & $O(\frac{n^3}{\sqrt{\epsilon}})$ \\
\citep{netrapalli2014non} & \xmark & Non-convex & Weak & Deterministic & $z = O(\frac{n}{r})$ & $O(r^2 n^2 \log(\frac{1}{\epsilon}))$ \\
\citep{yi2016fast} & \xmark & Non-convex & Weak & Deterministic & $z = O(\frac{n}{r^{1.5}})$ & $O(r n^2 \log(\frac{1}{\epsilon}) )$ \\
\citep{chiang2016robust} & \cmark & Convex & Strong & Random & $m = O(n^2), r = \widetilde{O}(\frac{n^2}{d})$ & $O(\frac{d n^2 + d^3}{\sqrt{\epsilon}})$ \\
\hline
\textbf{This work} & \cmark & Non-convex & Weak & Deterministic & $z = O(\frac{n}{d})$ & $O((d n^2 + d^2 r) \log(\frac{1}{\epsilon}))$ \\
\hlineB{2}
\end{tabular}
}
\end{table*}

\subsection{ROBUST INDUCTIVE LEARNING: MOTIVATION}
A key point to be noted is that Equation~\eqref{eqn:rpca} does not incorporate feature information; this is the so-called \textit{transductive} setting. In practical applications, we often have feature information available in the form of feature matrices $F_1$ and $F_2$. In the low-rank matrix recovery literature, this is often incorporated as a bilinear form, $L^* = F_1^\top W^* F_2$, which models the feature interactions via the latent space characterized by matrix $W^* \in \R^{d_1 \times d_2}$; this is the so-called \textit{inductive} setting. We now present a motivating real-life situation.
\begin{example}[\textbf{Using features for collaborative filtering with grossly corrupted observations}]
\label{eg:reco_sys}
In recommendation systems, it is often the case that we have user-product ratings matrix along with side information in the form of features corresponding to each user and product. It is common in large-scale machine learning applications that the number of products and users is very large compared to the features available for each user or product. Though a user might not have used a product, we would like to infer how the user might rate that product given the user and product features -- unlike the transductive setting this is possible in, and is a key application of, the inductive learning setting. Moreover, the ratings matrix is subject to various kinds of noise including erasures and outliers -- in this work, we consider a general noise model using which robust recovery of ratings is possible.
\end{example}
It is the goal of this paper to focus on the practically useful regime of $\max(d_1,d_2) \ll \min(n_1,n_2)$.

\subsection{RELATED WORK}
\label{sec:rel}
We now present the related work in both transductive and inductive settings.
\vspace*{-7pt}
\paragraph{Transductive setting: }This is the relatively more well-explored setting. There are two main solution approaches that have been considered in the literature namely, the convex and the non-convex methods.

Convex methods entail understanding the properties of the convex relaxation of Equation~\eqref{eqn:rpca} given by:
\begin{align}
\label{eqn:cvx_rpca}
\{ \widehat{L}, \widehat{S} \} = \arg \min_{L,S} & \trnorm{L} + \lambda \onenorm{S} \nn \\
& \text{s.t. } M = L+S
\end{align}
The works of \citep{chandrasekaran2011rank} and \citep{hsu2011robust} characterize the recovery properties of the convex program assuming a weak deterministic assumption on the support of the sparse matrix that the fraction of corrupted entries; the tightest bounds are that this fraction scales as $O(1/r)$. Under a stronger model of the sparse matrix namely, uniformly sampled support, \citep{candes2011robust} show that it is possible to have $r = O(n / \log(n))$ when $z_0 = O(n^2)$ for exact recovery with high probability. Numerically, the convex program in Equation~\eqref{eqn:cvx_rpca} is most commonly solved by variants of sub-gradient descent (involving iterative soft-thresholding); the convergence rate known for trace-norm programs is $O(1 / \sqrt{\epsilon})$~\citep{ji2009accelerated} for an $\epsilon$-close solution.

The underlying theme in non-convex methods 
involves retaining the formulation in Equation~\eqref{eqn:rpca}, starting with a suitable initialization and performing alternating projections onto non-convex sets (involving iterative hard-thresholding) until convergence. The work of \citep{netrapalli2014non} provides recovery guarantees under the weaker deterministic support assumptions matching the conditions of \citep{hsu2011robust}. However, the computational complexity of their algorithm scales with rank quadratically -- to improve this, \citep{yi2016fast} propose a (non-convex) projected gradient approach while paying a cost in the permissible number of sparse corruptions, i.e., $O(1/r^{1.5})$ as opposed to $O(1/r)$. A consequence of the analysis of these non-convex methods is that they admit a faster convergence rate -- specifically, $O(\log(1 / \epsilon))$ iterations for an $\epsilon$-close solution -- as opposed to convex methods.

It is noteworthy that the matrix completion problem (see, for instance, \citep{recht2011simpler} and \citep{jain2014fast}), where the goal is to recover an incomplete low-rank matrix, is a special case of the robust PCA problem where $S^*$ is taken to be $-L^*$ for the non-observed entries. Finally, we note that the robust PCA problem has been invoked in several applications including topic modeling~\citep{min2010decomposing}, object detection~\citep{li2004statistical} and so on.

\paragraph{Inductive setting: }To the best of our knowledge, currently, there is only one other work due to \citep{chiang2016robust} which considers the robust PCA problem in the inductive setting and presents a guaranteed convex optimization procedure for solving it; incorporating additional feature information into the robust PCA problem, they solve the following convex program, known as \textit{PCPF}:
\begin{align}
\label{eqn:cvx_irpca}
\{ \widehat{W}, \widehat{S} \} = \arg \min_{W,S} & \trnorm{W} + \lambda \onenorm{S} \nn \\
& \text{s.t. } M = F_1^\top W F_2 + S
\end{align}
For this paragraph, let $m := \zeronorm{S^*}$, $W^* = U_{W^*} \Sigma_{W^*} V_{W^*}^\top$ be the SVD of $W^*$, $F_1 F_1^\top = I$, $F_2 F_2^\top = I$ and $e_i$ denote the $i^{th}$ standard basis vector in $\R^n$; the key recovery guarantee states that $r = O(n^2 / d \log(n) \log(d))$ and $m = O(n^2)$; most notably, these guarantees are derived under stronger assumptions namely, (1) strong incoherence property, i.e., $\infnorm{U_{W^*} V_{W^*}^\top} \leq \mu \sqrt{r/n_1 n_2}$, $\max_j \twonorm{U_{W^*}^\top F_1 e_j} \leq \mu_{0} \sqrt{r/n_1}$, $\max_j \twonorm{V_{W^*}^\top F_2 e_j} \leq \mu_{0} \sqrt{r/n_2}$, $\max_j \twonorm{F_1 e_j} \leq \mu_{F_1} \sqrt{d/n_1}$, $\max_j \twonorm{F_2 e_j} \leq \mu_{F_2} \sqrt{d/n_2}$ (2) random sparsity, i.e., the support of $S^*$ is drawn uniformly at random from all subsets of $[n_1] \times [n_2]$ of size $m$. Note that assumptions such as uniform support sampling may not be realistic in practice. In contrast, as we explain in Sections \ref{sec:our_cont} and \ref{sec:assume}, our work relaxes the assumptions they require while admitting a simpler algorithm, novel analysis approach and faster convergence result.

In this context, it is also to be mentioned that for the related problem of inductive matrix completion is relatively better understood; recovery guarantees are known for both the convex (see, for instance, \citep{xu2013speedup} and \citep{chiang2015matrix}) and the non-convex (e.g., \citep{jain2013provable}) approaches. Other related works based on probabilistic modeling include \citep{zhou2012kernelized} and \citep{porteous2010bayesian}.

To summarize, we position this paper with respect to other works in Table~\ref{tab:summary}. While we have highlighted the most relevant existing results, note that the list provided here is by no means comprehensive -- such a list is beyond the scope of this work.

\subsection{OUR CONTRIBUTIONS}
\label{sec:our_cont}
To the best of our knowledge, our work is the first to derive a provable and efficient non-convex method for robust PCA in the inductive setting. Our novelty and technical contributions can be summarized along the following axes:
\begin{enumerate}[nolistsep,noitemsep]
\item \textit{Assumptions (Section~\ref{sec:assume}): }We use the weakest assumptions, i.e., (1) weak incoherence conditions on only the feature matrices and (2) (weak) deterministic support of the sparse matrix.
\item \textit{Algorithm (Section~\ref{sec:algo}): }Our algorithm (IRPCA-IHT) performs simple steps involving spectral and entry-wise hard-thresholding operations.
\item \textit{Guarantees (Sections \ref{sec:sym_noiseless}, \ref{sec:sym_noisy} and \ref{sec:asymm}: }We show $\epsilon$-close recovery in both the noiseless and noisy cases for problems of general size, feature dimension, rank and sparsity; moreover, our method has the fast (linear) convergence property.
\item \textit{Experiments (Section~\ref{sec:expt}): }We substantiate our theoretical results by demonstrating gains on both synthetic and real-world experiments.
\end{enumerate}


\section{PROBLEM SETUP}

\subsection{NOTATION AND PRELIMINARIES}
Let $M = L^*+S^*$, i.e., $\{ M, L^*, S^* \} \in \R^{n_1 \times n_2}$ are matrices such that the input data matrix $M$ is the superposition of two component matrix signals namely, the low-rank component $L^*$ and the sparse component $S^*$. 
Here, $S^*$ is a sparse perturbation matrix with unknown (deterministic) support and arbitrary magnitude. In our inductive setting, side information or features are present in the bilinear form specified $L^* = F_1^\top W^* F_2$. The feature matrices are denoted as $F_1 \in \R^{d_1 \times n_1}$ and $F_2 \in \R^{d_2 \times n_2}$. Note that the feature dimensions are $d_1$ and $d_2$ such that $\max(d_1, d_2) \ll \min(n_1, n_2)$ and $W^* \in \R^{d_1 \times d_2}$ is the rank-$r$ latent matrix to be estimated where $r \leq \min(d_1,d_2)$; intuitively, this latent matrix parameter describes the interaction and correlation among the feature vectors. Now, our optimization problem is given by:
\begin{align}
\{ \widehat{W}, & \widehat{S} \} = \arg \min_{W,S} \frobnorm{M -  F_1^\top W F_2 - S} \nn \\
& \text{s.t. } \rank(W) \leq r, \zinorm{S} \leq z_2, \iznorm{S} \leq z_1
\label{eqn:irpca}
\end{align}
Here, for a matrix $A \in \R^{n_1 \times n_2}$, we define the relevant functions, $\zinorm{A} := \max_j \sum_{i=1}^{n_2} \mathbf{1}(A_{ij} \neq 0)$, $\iznorm{A} := \max_i \sum_{j=1}^{n_1} \mathbf{1}(A_{ij} \neq 0)$, $\infnorm{A} := \max_{ij} \abs{A_{ij}}$, Frobenius norm $\frobnorm{A} := \sqrt{\sum_{i=1}^{n_1} \sum_{j=1}^{n_2} A_{ij}^2}$, spectral norm $\twonorm{A} = \max_{\twonorm{x}=1,\twonorm{y}=1} x^\top A y$ for unit vectors $x \in \R^{n_1}$ and $y \in \R^{n_2}$. Next, for a matrix $A$, we denote its maximum and minimum singular value by $\sigma_{\max}(A)$ and $\sigma_{\min}(A)$ respectively, and further the condition number of $A$ is denoted by $\kappa(A) := \sigma_{\max}(A) / \sigma_{\min}(A)$. The pseudoinverse of a matrix $A$ is denoted by $B = A^\dagger$ and is computed as $B := (A^\top A)^{-1} A^\top$ where $A$ is assumed to be of full rank. Let $I$ denote the identity matrix whose size will be clear from the context. Finally, we use $e_i$ to denote the $i^{th}$ standard basis vector in the appropriate dimension, which will also be clear from the context.

\begin{remark}[\textbf{Noisy case: motivation and setup}]
\label{rem:noisy}
Note that, so far, for simplicity and clarity, we have been focusing on the case when $M = L^* + S^*$ where $L^* = F_1^\top W^* F_2$. This model posits that $W^*$ is exactly a rank-$r$ matrix and $S^*$ is exactly a sparse matrix which might not be the case in practice. Our approach, for solving Equation~\ref{eqn:irpca}, in terms of both the algorithm and the analysis, also handles the noisy case $M = F_1^\top W^* F_2 + S^* + N^*$ wherein $N^*$ is some generic bounded additive noise that renders $L^*$ approximately low-rank or $S^*$ approximately sparse.
\end{remark}

\subsection{ASSUMPTIONS}
\label{sec:assume}
We now state and explain the intuition behind the (by now standard) identifiability assumptions on the quantities involved in our optimization problem so that it is well-posed. Also, we re-emphasize specifically that Assumptions \ref{asm:incoh} and \ref{asm:sps} are much weaker and generic than previous works such as \citep{chiang2016robust}.
\begin{enumerate}[nolistsep,noitemsep]
\item \label{asm:feas} \textit{Feasibility condition: }We assume that $\text{row}(L^*) \subseteq \text{row}(F_2)$ and $\text{col}(L^*) \subseteq \text{col}(F_1^\top)$.
\item \label{asm:incoh} \textit{Weak incoherence of the feature matrices: }Let $F_1 = U_{F_1} \Sigma_{F_1} V_{F_1}^\top$ be the SVD of the feature matrix $F_1$ such that $U_{F_1} \in \R^{d_1 \times d_1}$, $V_{F_1} \in \R^{d_1 \times n_1}$ are the matrices of left and right singular vectors respectively, and $\Sigma_{F_1} \in \R^{d_1 \times d_1}$ is the diagonal matrix of singular values. Then, we assume $\max_{i} \twonorm{e_i^\top V_{F_1}} \leq \mu_{F_1} \sqrt{d_1/n_1}$ where $\mu_{F_1}$ is called the incoherence constant of matrix $F_1$. Similarly, we assume incoherence of $F_2$ as well.
\item \label{asm:sps} \textit{Bounded deterministic sparsity: }Let the number of non-zeros per row of the sparse matrix $S$ satisfy $z_1 \leq n_1 / 20 \mu^2 d_1 \kappa$; similarly, let the number of non-zeros per column of the sparse matrix $S$ satisfy $z_2 \leq n_2 / 20 \mu^2 d_2 \kappa$. Here, $\mu = \max(\mu_{F_1},\mu_{F_2})$ and $\kappa = \max(\kappa(F_1),\kappa(F_2))$.
\item \label{asm:bdd} \textit{Bounded latent matrix: }Without loss of generality, we assume that the latent matrix is bounded, ie, $\twonorm{W^*} \leq c_W$ for a global constant $c_W$.
\end{enumerate}
Having side information always need not help; otherwise, we may always generate random features and obtain improvement over transductive learning. In disallowing this, Assumption~\ref{asm:feas} is a necessary condition, which ensures that we have informative features $F_1$ and $F_2$ in the sense that they are correlated meaningfully in the latent space given by $W^*$.

In order to make the low-rank component not too sparse and distinguishable from the sparse perturbation, we make the weak incoherence assumption on the feature matrices which says that the energy of the right singular vectors of the matrices is well-spread with respect to all the co-ordinate axes. This is precisely quantified by Assumption~\ref{asm:incoh}.

\begin{algorithm}[t]
\caption{IRPCA-IHT: Inductive Robust PCA via Iterative Hard Thresholding}
\label{alg:incrpca}
\begin{algorithmic}[1]
\STATE \textbf{Input}: Grossly corrupted data matrix $M \in \R^{n_1 \times n_2}$, feature matrices $F_1 \in \R^{d_1 \times n_1}, F_2 \in \R^{d_2 \times n_2}$, true rank $r$, noise parameter $\nu$, global constant $c_W$.
\STATE \textbf{Output}: Estimated latent matrix $\widehat{W} \in \R^{d_1 \times d_2}$ and sparse perturbation matrix $\widehat{S} \in \R^{n_1 \times n_2}$.
\STATE Initialize $L_0 \leftarrow 0$ and $\zeta_0 \leftarrow 5 \mu_{F_1} \mu_{F_2} \sigma_{\max}(F_1) \sigma_{\max}(F_2) \sqrt{\frac{d_1 d_2}{n_1 n_2}} c_W + \nu$ where $\mu_{F_1}$ and $\mu_{F_2}$ are the incoherence constants as computed in Assumption~\ref{asm:incoh}.
\FOR{$t = 1, \ldots, T$}
\STATE $\zeta_t \leftarrow \frac{\mu_{F_1} \mu_{F_2} \sigma_{\max}(F_1) \sigma_{\max}(F_2) \sqrt{d_1 d_2} c_W}{5^{t-1} \sqrt{n_1 n_2}} + \nu$.
\STATE $S_t \leftarrow \P_{\zeta_t} (M - L_{t-1})$.
\STATE $W_t \leftarrow \P_r \paran{ {(F_1^\top)}^\dagger (M-S_t) (F_2)^\dagger }$.
\STATE $L_t \leftarrow F_1^\top W_{t}F_2$.
\ENDFOR
\STATE Set $\widehat{W} \leftarrow W_T$ and $\widehat{S} \leftarrow S_T$.
\RETURN $\widehat{W}, \widehat{S}$.
\end{algorithmic}
\end{algorithm}

In our problem setup we assume that a generic (possibly adversarial) deterministic sparse perturbation is added to the low-rank matrix. This is quantified by Assumption~\ref{asm:sps}. In particular, we \textit{do not} have any specific distributional assumptions on the support of the sparse matrix, and the magnitudes and signs of its non-zero entries.
\begin{remark}[\textbf{Noisy case: assumptions}]
\label{rem:noisy_asm}
To obtain recovery guarantees for the noisy case described in Remark~\ref{rem:noisy}, the only assumption on $N^*$ we have is that it is suitably well-behaved -- this is quantified by assuming $\infnorm{N^*} \leq 1 / 40 \mu^2 d \kappa^2$.
\end{remark}

\subsection{CORRUPTION RATE}
\label{sec:samp_comp}
In this work, as give in Table~\ref{tab:summary}, we refer to the rank-sparsity trade-off in Assumption~\ref{asm:sps} as `corruption rate' -- this is the allowable extent to which the model is robust to gross outliers while retaining identifiability, ie, the number of non-zeros in the sparse corruption matrix. Note that, by using features, we are always able to tolerate $\Omega(n_1 / d_1)$ (resp. $\Omega(n_2 / d_2)$) gross corruptions per row (resp. column). This is a gain over the transductive setting as in \citep{netrapalli2014non} where the permissible number of outliers is $O(n_1 / r)$ (resp. $O(n_2 / r)$) per row (resp. column) and $r$ could be potentially $O(n)$.

\subsection{ALGORITHM}
\label{sec:algo}
Our method, presented in Algorithm~\ref{alg:incrpca}, uses two non-convex projection operations as building blocks. Our algorithm essentially applies these projections to the low-rank and sparse residuals in an alternating manner until convergence, i.e., at the $t^{th}$ iteration, the residuals $M-L_{t-1}$ and $M-S_t$ are projected onto the set of sparse and low-rank matrices respectively via the following hard-thresholding operations: 
\begin{enumerate}[nolistsep,noitemsep]
\item \textit{Spectral hard thresholding: }This is used for projecting a matrix onto the set of low-rank matrices. It is achieved via the truncated-SVD operation and is denoted by $B = \P_r(A)$. Here, we are finding a matrix rank-$r$ matrix $B$ which best approximates $A$.
\item \textit{Entry-wise hard thresholding: }This is used for projecting a matrix onto the set of sparse matrices. We compute a matrix $B = \P_a (A)$ where $B_{ij} = A_{ij}$ if $\abs{A_{ij}} > a$ and $B_{ij} = 0$ if $\abs{A_{ij}} \leq a$.
\end{enumerate}
Note that the above hard thresholding operations result in in rank-restricted and sparsity-restricted matrices for appropriate choices of $r$ and $a$. It is noteworthy that our algorithm, unlike many non-convex optimization procedures, employs the very simple initialization scheme of setting the initial iterates to the all-zeros matrix ($L_0$) while achieving global convergence.

The algorithm needs (a) the true rank $r$ of $W^*$, and (b) the noise parameter $\nu$ (for which it suffices to have the knowledge of a reasonable bound on $\infnorm{N^*}(1+3\mu^2 d \kappa^2)$). 
In practice, the knowledge of $r$ and $\infnorm{N^*}$ can be obtained using cross-validation, grid search or leveraging domain knowledge of the specific application; for instance, in the noiseless setting, $N^* = 0$ and hence, $\nu$ is set to zero. Furthermore, efficient ways of estimating the incoherence of a matrix have been studied in the literature; see for instance, \citep{mohri2011can} and \citep{drineas2012fast}.

A key difference from related approaches in the transductive setting~\citep{netrapalli2014non} is the more efficient spectral hard thresholding that is possible due to the available feature information, i.e., our approach involves a truncated SVD operation in the feature space rather than the ambient space which is computationally inexpensive. Specifically, since $L^* = F_1^\top W^* F_2$, in Step 7 of Algorithm~\ref{alg:incrpca}, we find the best matrix $\underline{W}_t$ such that $M-S_t \approx F_1^\top \underline{W}_t F_2$ for every $t$. This is achieved via a bilinear transformation of the residual $M-S_t$ given by ${(F_1^\top)}^\dagger (M-S_t) (F_2)^\dagger$ followed by a truncated $r$-SVD of the resulting $d_1 \times d_2$ matrix $\underline{W}_t$ to obtain $W_t$. Note that the low-rank iterates may then be computed as $L_t = F_1^\top W_t F_2$; specifically, $\widehat{L} = F_1^\top \widehat{W} F_2$ at termination.

\subsection{COMPUTATIONAL COMPLEXITY}
\label{sec:comp_comp}
We now infer the per-iteration computational complexity from Algorithm~\ref{alg:incrpca}, specifically Steps 6-8. The entry-wise hard-thresholding in Step 6 has a time complexity of $O(n_1 n_2)$. The spectral hard-thresholding in Step 7 has a time complexity of $O(\max(n_1^2 d_1,n_2^2 d_2) + d_1 d_2 r)$ due to the involved matrix multiplication followed by the truncated SVD operation. Step 8 has a complexity of $O(n_1 n_2 \max(d_1, d_2))$. Unlike previous~\citep{chiang2016robust} trace norm based approaches in the inductive setting, we directly perform rank-$r$ SVD in Step 7 leading to a complexity of just $O(d_1 d_2 r)$ as opposed to $O(d_1 d_2 \min(d_1, d_2))$; this is a significant gain when $r \ll \min(d_1, d_2)$. In the transductive setting as well, our method has significant computational gains over the state-of-the art AltProj algorithm of \citep{netrapalli2014non}, especially in the regime $\max(d_1,d_2) < r^2$ while maintaining the corruption rate guarantees as in Section~\ref{sec:samp_comp}.


\section{ANALYSIS}
\subsection{PROOF OUTLINE}
For simplicity, we first begin with the symmetric noiseless case (Section~\ref{sec:sym_noiseless}). Upon presenting the convergence result for this case, we show how to extend our analysis and result to general cases including the noisy case (Section~\ref{sec:sym_noisy}) and the asymmetric matrix case (Section~\ref{sec:asymm}).

The key steps in the proof of convergence of Algorithm~\ref{alg:incrpca} involve analyzing the two main hard-thresholding operations and controlling the error decrease, in terms of a suitably chosen potential function, as a result of performing these operations. Since we care about recovering every entry of both the low-rank and the sparse matrix components, we choose the infinity norm of appropriate error matrices as our potential function to track the progress of our algorithm. Bounds in the infinity norm are trickier to obtain than the more usual spectral norm. Consequently, our guarantees are stronger as opposed to showing faithful recovery in the spectral or Frobenius norms. Specifically, for a given $t$, we show that $\infnorm{L^*-L_t} \leq 2 \infnorm{S^*-S_t} \leq \frac{1}{5} \infnorm{L^*-L_{t-1}}$. Upon showing this geometric reduction in error, we use induction to stitch up argument across iterations.

At a high level, the proof techniques involved for a fixed $t$ are as follows:
\begin{enumerate}[noitemsep,nolistsep]
\item \textit{Entry-wise hard thresholding: }The are two aspects here. First, given that $L_{t-1}$ is close to $L^*$, we show, by using a case-by-case argument, that $S_t$ is also close to $S^*$. Second, we show, by contradiction, that the $S_t$ does not have any spurious entries that are not present in $S^*$ originally.
\item \textit{Spectral hard thresholding: }Given that $S_t$ is close to $S^*$, we show that $L_t$ gets closer to $L^*$ than $L_{t-1}$. There are three aspects here. First, we use the weak incoherence property of features to obtain infinity norm bounds. Second, we use Weyl's eigenvalue pertubation lemma to quantify how close the estimate $W_t$ is to the true latent matrix $W^*$. Third, we bound the spectral norm of a sparse matrix tightly in terms of its infinity norm.
\end{enumerate}

For the noisy case, using Remark~\ref{rem:noisy}, we simply account for the noise terms as well in the error reduction argument. Extension to the asymmetric case proceeds via the standard symmetric embedding technique, both for the noiseless and the noisy setting, as detailed in Section~\ref{sec:asymm}; a key point to be noted here is that we maintain the rank-sparsity conditions in the symmetrized matrix.

\subsection{SYMMETRIC NOISELESS CASE}
\label{sec:sym_noiseless}
Let $N^* = 0$, $W^* = {(W^*)}^\top$ and $S^* = {(S^*)}^\top$. For simplicity, let the features be equal i.e., $F_1 = F_2 = F$ and $\mu_{F_1} = \mu_{F_2} = \mu$. Further, let $d_1 = d_2$, $z_1 = z_2 = z$ and $n_1 = n_2 = n$. Also, recall that $\nu = 0$ in the noiseless case. We now state our main result. 

\begin{theorem}[\textbf{Noiseless case: fast and correct convergence}]
\label{thm:sym_noiseless}
Under the assumptions of Section~\ref{sec:assume}, after $T > \lceil \log_5 (2 \mu^2 \sigma_{\max}^2(F) \frac{d}{n} \frac{c_W}{\epsilon}) \rceil + 1$ iterations of Algorithm~\ref{alg:incrpca}, we have $\infnorm{L^*-\widehat{L}} \leq \epsilon$, $\rank(\widehat{L}) \leq r$, $\infnorm{S^*-\widehat{S}} \leq \epsilon$ and $\Supp(\widehat{S}) \subseteq \Supp(S^*)$.
\end{theorem}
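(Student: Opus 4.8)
The plan is to argue by induction on the iteration index $t$, tracking the decaying quantity $\zeta_t = \mu^2\sigma_{\max}^2(F)(d/n)c_W/5^{t-1}$ (recall $\nu=0$) and maintaining the invariant $\infnorm{L^*-L_{t-1}} \le \zeta_t$ together with $\Supp(S_{t-1})\subseteq\Supp(S^*)$. The per-iteration engine is the two-step estimate $\infnorm{S^*-S_t}\le 2\zeta_t$ (entrywise step) and $\infnorm{L^*-L_t}\le \tfrac{1}{10}\infnorm{S^*-S_t}\le\zeta_{t+1}$ (spectral step), which together give the geometric decrease promised in the proof outline. For the base case $t=1$ I have $L_0=0$, so I must bound $\infnorm{L^*}$. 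Writing $L^*_{ij}=(Fe_i)^\top W^*(Fe_j)$ and using the SVD $F=U_F\Sigma_F V_F^\top$ with the weak incoherence bound $\twonorm{e_i^\top V_F}\le\mu\sqrt{d/n}$, I get $\twonorm{Fe_i}\le\sigma_{\max}(F)\mu\sqrt{d/n}$, whence $\infnorm{L^*}\le\mu^2\sigma_{\max}^2(F)(d/n)c_W=\zeta_1$, anchoring the induction.

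For the entrywise hard-thresholding step I write the residual as $M-L_{t-1}=S^*+E$ with $E:=L^*-L_{t-1}$, so that $\infnorm{E}\le\zeta_t$ by the invariant, and treat $E$ as a bounded perturbation of $S^*$. A coordinatewise case analysis does two jobs. On indices outside $\Supp(S^*)$ the residual equals $E_{ij}$, whose magnitude is at most the threshold $\zeta_t$, so $\P_{\zeta_t}$ annihilates it; arguing by contradiction this gives $\Supp(S_t)\subseteq\Supp(S^*)$, the ``no spurious entries'' property. On indices inside $\Supp(S^*)$, either the entry survives, in which case $\abs{(S_t)_{ij}-S^*_{ij}}=\abs{E_{ij}}\le\zeta_t$, or it is thresholded to zero, in which case $\abs{S^*_{ij}}\le\abs{(M-L_{t-1})_{ij}}+\abs{E_{ij}}\le 2\zeta_t$. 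Collecting the cases yields $\infnorm{S^*-S_t}\le 2\zeta_t$ while preserving the support containment.

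The spectral hard-thresholding step is where the feature geometry is exploited and is, I expect, the crux. Using the feasibility assumption $L^*=F^\top W^* F$ together with $(F^\top)^\dagger F^\top=I$ and $FF^\dagger=I$, the pre-projection matrix collapses to $\widetilde W_t:=(F^\top)^\dagger(M-S_t)F^\dagger=W^*+\Delta_t$ with $\Delta_t:=(F^\top)^\dagger(S^*-S_t)F^\dagger$. Since $W^*$ is already rank $r$ and $W_t=\P_r(\widetilde W_t)$ is the best rank-$r$ approximation of $\widetilde W_t$, Weyl's perturbation inequality gives $\twonorm{W_t-\widetilde W_t}\le\twonorm{W^*-\widetilde W_t}=\twonorm{\Delta_t}$, and the triangle inequality then yields $\twonorm{W^*-W_t}\le 2\twonorm{\Delta_t}$. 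I pass back to the ambient infinity norm via $(L^*-L_t)_{ij}=(Fe_i)^\top(W^*-W_t)(Fe_j)$ and the same incoherence bound, obtaining $\infnorm{L^*-L_t}\le\mu^2\sigma_{\max}^2(F)(d/n)\twonorm{W^*-W_t}$. The essential quantitative lemma is the tight spectral estimate for the sparse error: because $\Supp(S_t)\subseteq\Supp(S^*)$, the difference $S^*-S_t$ has at most $z$ nonzeros per row and per column, so $\twonorm{S^*-S_t}\le z\,\infnorm{S^*-S_t}$; combined with $\twonorm{\Delta_t}\le\sigma_{\min}^{-2}(F)\twonorm{S^*-S_t}$ this produces a prefactor of order $\mu^2\kappa^2(d/n)z$ relating $\infnorm{L^*-L_t}$ to $\infnorm{S^*-S_t}$. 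The sparsity budget $z\le n/(20\mu^2 d\kappa)$ of Assumption~\ref{asm:sps} is calibrated precisely so that this prefactor is an absolute constant at most $\tfrac{1}{10}$, giving $\infnorm{L^*-L_t}\le\tfrac{1}{10}\cdot 2\zeta_t=\zeta_{t+1}$ and closing the induction.

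It remains to stitch the iterations together. The invariant yields $\infnorm{L^*-L_T}\le\zeta_{T+1}=5^{-T}\mu^2\sigma_{\max}^2(F)(d/n)c_W$ and, from the entrywise step, $\infnorm{S^*-S_T}\le 2\zeta_T=2\cdot 5^{-(T-1)}\mu^2\sigma_{\max}^2(F)(d/n)c_W$; the second (slightly larger) bound is what forces the factor $2$ inside the logarithm and the $+1$ in the stated iteration count $T>\lceil\log_5(2\mu^2\sigma_{\max}^2(F)\tfrac{d}{n}\tfrac{c_W}{\epsilon})\rceil+1$, after which both errors are below $\epsilon$. The conclusions $\rank(\widehat L)\le r$ and $\Supp(\widehat S)\subseteq\Supp(S^*)$ are immediate from $\widehat L=F^\top\P_r(\cdot)F$ and from the support-preservation established in the entrywise step. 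The step I expect to be the main obstacle is the spectral one: obtaining an \emph{infinity-norm} (rather than spectral or Frobenius) guarantee through the rank-$r$ projection forces the simultaneous use of incoherence, the Weyl-based best-approximation bound, and the sparse-matrix spectral estimate, and the precise bookkeeping of the $\mu$, $\kappa$, and $d/n$ factors against the sparsity budget must line up to deliver a contraction factor strictly below one.
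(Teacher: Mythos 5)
Your proposal is correct and follows essentially the same route as the paper's proof: induction anchored by the zero initialization, an entry-wise case analysis yielding $\infnorm{S^*-S_t}\leq 2\zeta_t$ with support containment, and a spectral step combining feature incoherence, the rank-$r$ projection error bound, $\twonorm{F^\dagger}=1/\sigma_{\min}(F)$, and the sparse-matrix estimate $\twonorm{E_t}\leq z\infnorm{E_t}$ to obtain the factor-$5$ contraction. The only cosmetic difference is that you phrase the projection error via Eckart--Young best-approximation plus a triangle inequality, whereas the paper writes out the full SVD of $W^*+G^\top E_t G$ and applies Weyl's inequality to the discarded block $\Lambda_\perp$ -- the two arguments give the identical bound $\twonorm{W^*-W_t}\leq 2\twonorm{G^\top E_t G}$.
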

\begin{remark}
Several implications are immediate from Theorem~\ref{thm:sym_noiseless}: (1) our algorithm converges to the true parameters at a linear rate; (2) we have faithful latent space recovery as well as outlier detection; (3) assumptions used for deriving the recovery guarantee are weaker than previous works in the inductive setting; (4) we achieve improved corruption rate; (5) guarantees for the transductive robust PCA problem are recovered if the features are identity matrices and $W^* = L^*$; in particular, our corruption rate bounds match up to a factor of $d/r$.
\end{remark}
We now prove Theorem~\ref{thm:sym_noiseless}.
\begin{proof}
We prove this by induction over $t$. Note that Step 3 of Algorithm~\ref{alg:incrpca} initializes $\zeta_0 = 5 \mu^2 \sigma_{\max}^2(F) \frac{d}{n} c_W$ (as $N^* = 0$) and sets $\zeta_t = \zeta_{t-1}/5$ for all $t \geq 1$. For $t = 1$, since $L_{0} = 0$ by our initialization, it is clear that $\infnorm{L^* - L_{0}} \leq \infnorm{L^*} \leq \infnorm{F^\top W^* F} \leq \mu^2 \sigma_{\max}^2(F) \frac{d}{n} c_W$ and hence the base case holds.

Next, for $t \geq 1$, by using Lemma~\ref{lem:sparse}, we have $\infnorm{S^*-S_{t}} \leq 2 \mu^2 \sigma_{\max}^2(F) \frac{d}{n} \frac{c_W}{5^{t-1}}$ and $\Supp(S_{t}) \subseteq \Supp(S^*)$ and further, by Lemma~\ref{lem:decay}, we have $\infnorm{L^*-L_t} \leq \mu^2 \sigma_{\max}^2(F) \frac{d}{n} \frac{c_W}{5^{t}}$. Moreover, setting $T > \lceil \log_5 (2 \mu^2 \sigma_{\max}^2(F) \frac{d}{n} \frac{c_W}{\epsilon}) \rceil + 1$, we have $\infnorm{L^*-L_T} \leq \epsilon$ and $\infnorm{S^*-S_T} \leq \epsilon$.
\end{proof}

\begin{lemma}[\textbf{Noiseless case: faithful support recovery due to entry-wise hard thresholding}]
\label{lem:sparse}
Let $L_{t-1}$ satisfy the error condition that $\infnorm{L^* - L_{t-1}} \leq \mu^2 \sigma_{\max}^2(F) \frac{d}{n} \frac{c_W}{5^{t-1}}$. Then, we have $\infnorm{S^*-S_{t}} \leq 2 \mu^2 \sigma_{\max}^2(F) \frac{d}{n} \frac{c_W}{5^{t-1}}$ and $\Supp(S_{t}) \subseteq \Supp(S^*)$.
\end{lemma}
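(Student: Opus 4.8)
The plan is to prove both conclusions by a purely entry-wise argument, exploiting the fact that the threshold $\zeta_t$ used in Step 6 of Algorithm~\ref{alg:incrpca} is chosen to match \emph{exactly} the hypothesized error level on the low-rank iterate. Concretely, I would first write the thresholded residual as $S_t = \P_{\zeta_t}(M - L_{t-1}) = \P_{\zeta_t}(S^* + E)$, where $E := L^* - L_{t-1}$ and the hypothesis reads $\infnorm{E} \leq \mu^2 \sigma_{\max}^2(F) \frac{d}{n} \frac{c_W}{5^{t-1}} = \zeta_t$. The entire lemma then reduces to understanding, coordinate by coordinate, how entry-wise hard thresholding at level $\zeta_t$ acts on the perturbed sparse matrix $S^* + E$ when the perturbation $E$ is uniformly bounded by $\zeta_t$ in the $\infnorm{\cdot}$ sense.

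For the support containment $\Supp(S_t) \subseteq \Supp(S^*)$, I would argue contrapositively: fix $(i,j) \notin \Supp(S^*)$, so that $S^*_{ij} = 0$ and hence $(S^* + E)_{ij} = E_{ij}$. Then $\abs{(S^*+E)_{ij}} = \abs{E_{ij}} \leq \infnorm{E} \leq \zeta_t$, so this entry falls at or below the threshold and is zeroed out by $\P_{\zeta_t}$ (which retains only entries of magnitude strictly exceeding $\zeta_t$), giving $(S_t)_{ij} = 0$. Thus $S_t$ can introduce no entries outside the support of $S^*$.

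For the error bound I would run a short case analysis on each coordinate $(i,j)$. If $S^*_{ij}=0$, the previous paragraph already yields $(S^*-S_t)_{ij}=0$. If $S^*_{ij}\neq 0$ and the residual survives thresholding, i.e. $\abs{(S^*+E)_{ij}} > \zeta_t$, then $(S_t)_{ij} = S^*_{ij}+E_{ij}$ and so $\abs{(S^*-S_t)_{ij}} = \abs{E_{ij}} \leq \zeta_t$. The only remaining case is $S^*_{ij}\neq 0$ with the residual killed, i.e. $\abs{(S^*+E)_{ij}} \leq \zeta_t$; here $(S_t)_{ij}=0$, so $\abs{(S^*-S_t)_{ij}} = \abs{S^*_{ij}} \leq \abs{(S^*+E)_{ij}} + \abs{E_{ij}} \leq \zeta_t + \zeta_t = 2\zeta_t$ by the triangle inequality. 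Taking the maximum over the three cases gives $\infnorm{S^*-S_t} \leq 2\zeta_t = 2\mu^2 \sigma_{\max}^2(F) \frac{d}{n} \frac{c_W}{5^{t-1}}$, as claimed.

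The argument is elementary, and I do not anticipate a genuine obstacle; the single point deserving care is the last case, where a true nonzero entry of $S^*$ is erroneously thresholded to zero. This is precisely where the factor of $2$ in the bound originates, and it is benign only because such an entry must have had magnitude at most $2\zeta_t$ to begin with (otherwise the residual would have exceeded the threshold and survived). I would also emphasize that this lemma invokes neither the incoherence Assumption~\ref{asm:incoh} nor the sparsity Assumption~\ref{asm:sps} directly: the only inputs are the closeness of $L_{t-1}$ and the matched choice of $\zeta_t$. Those structural assumptions instead enter the companion spectral-thresholding lemma (Lemma~\ref{lem:decay}), which is what re-establishes the hypothesis $\infnorm{L^*-L_t} \leq \zeta_{t+1}$ needed to continue the induction to the next iteration.
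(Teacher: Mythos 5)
Your proposal is correct and follows essentially the same route as the paper's proof: an entry-wise case analysis on whether a residual entry survives thresholding (giving error $\leq \zeta_t$ via the hypothesis) or is killed (giving error $\leq 2\zeta_t$ via the triangle inequality), together with the observation that entries outside $\Supp(S^*)$ have residual magnitude $\abs{(L^*-L_{t-1})_{ij}} \leq \zeta_t$ and hence are zeroed out. The only cosmetic difference is that you organize the argument into three cases and state the support containment contrapositively, whereas the paper uses two cases plus a proof by contradiction; the mathematical content, including the origin of the factor of $2$, is identical.
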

\begin{proof}
Note that $S_{t} = \P_{\zeta_{t}} (M - L_{t-1}) = \P_{\zeta_{t}} (L^* - L_{t-1} + S^*)$. By the definition of our entry-wise hard thresholding operation, we have the following:
\begin{enumerate}[nolistsep,noitemsep]
\item Term $e_i^\top S_t e_j = e_i^\top (M-L_{t-1}) e_j = e_i^\top (L^*+S^*-L_{t-1}) e_j$ when $\abs{e_i^\top (M - L_{t-1}) e_j} > \zeta_t$.Thus $\abs{e_i^\top (S^*-S_{t}) e^j} = \abs{e_i^\top (L^*-L_{t-1}) e_j} \leq \mu^2 \sigma_{\max}^2(F) \frac{d}{n} \frac{c_W}{5^{t-1}}$.
\item Term $e_i^\top S_t e_j = 0$ when $\abs{e_i^\top (M - L_{t-1}) e_j} = \abs{e_i^\top (L^* + S^* - L_{t-1}) e_j} \leq \zeta_t$. Using the triangle inequality, we have $\abs{e_i^\top (S^*-S_{t}) e_j} \leq \abs{e_i^\top S^* e_j} \leq \zeta_t + \abs{e_i^\top (L^* - L_{t-1}) e_j} \leq 2 \mu^2 \sigma_{\max}^2(F) \frac{d}{n} \frac{c_W}{5^{t-1}}$.
\end{enumerate}
Thus, the above two cases show the validity of the entry-wise hard thresholding operation. To show correct support recovery, we show that for any given $(i,j)$, if $e_i^\top S^* e_j = 0$ then $e_i^\top S_{t} e_j$ is also zero for all $t$. Noting that $M = L^* + S^*$ and $e_i^\top S^* e_j = 0$, $e_i^\top S_{t} e_j = e_i^\top (M - L_{t-1}) e_j = e_i^\top (L^* - L_{t-1}) e_j \neq 0$ iff $\abs{e_i^\top (L^* - L_{t-1}) e_j} > \zeta_t$. But this is a contradiction since $\abs{e_i^\top (L^* - L_{t-1}) e_j} \leq \mu^2 \sigma_{\max}^2(F) \frac{d}{n} \frac{c_W}{5^{t-1}} = \zeta_t$ by the inductive assumption.
\end{proof}

\begin{lemma}[\textbf{Noiseless case: error decay due to spectral hard thresholding}]
\label{lem:decay}
Let $S_{t}$ satisfy the error condition that $\infnorm{S^* - S_{t}} \leq 2 \mu^2 \sigma_{\max}^2(F) \frac{d}{n} \frac{c_W}{5^{t-1}}$. Then, we have $\infnorm{L^*-L_t} \leq \mu^2 \sigma_{\max}^2(F) \frac{d}{n} \frac{c_W}{5^{t}}$ and $\rank(L_t) \leq r$.
\end{lemma}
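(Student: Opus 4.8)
The plan is to carry out the whole argument in the low-dimensional ``feature space'' $\R^{d\times d}$, bound the perturbation of the latent matrix there, and then transport the resulting error back to the ambient $\R^{n\times n}$ using incoherence. The rank claim is immediate and I would dispose of it first: since $W_t=\P_r(\cdot)$ has rank at most $r$, the iterate $L_t=F^\top W_t F$ satisfies $\rank(L_t)\le\rank(W_t)\le r$. For the infinity-norm bound, I would first rewrite the spectral-thresholding step exactly. Setting $\underline{W}_t := (F^\top)^\dagger (M-S_t) F^\dagger$ and using $M-S_t = F^\top W^* F + (S^*-S_t)$ together with the pseudoinverse identities $(F^\top)^\dagger F^\top = I$ and $F F^\dagger = I$ (valid because $F$ has full row rank, which is where Assumption~\ref{asm:feas} enters), I obtain
\[ \underline{W}_t = W^* + \Delta_t, \quad \text{where } \Delta_t := (F^\top)^\dagger (S^*-S_t) F^\dagger. \]

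Next I would invoke the rank-$r$ structure. Since $W^*$ is exactly rank $r$ and $W_t=\P_r(\underline{W}_t)$ is the best rank-$r$ approximation of $\underline{W}_t$, Weyl's inequality gives $\twonorm{\underline{W}_t - W_t} = \sigma_{r+1}(\underline{W}_t) \le \sigma_{r+1}(W^*) + \twonorm{\Delta_t} = \twonorm{\Delta_t}$, so by the triangle inequality $\twonorm{W_t - W^*}\le 2\twonorm{\Delta_t}$. I would then control $\twonorm{\Delta_t}$ by submultiplicativity, $\twonorm{\Delta_t}\le \twonorm{(F^\top)^\dagger}\,\twonorm{S^*-S_t}\,\twonorm{F^\dagger} = \twonorm{S^*-S_t}/\sigma_{\min}^2(F)$. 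The crucial step (the third aspect of the outline) is to pass from the spectral to the infinity norm of the sparse error: by Lemma~\ref{lem:sparse} we have $\Supp(S^*-S_t)\subseteq\Supp(S^*)$, so $S^*-S_t$ carries at most $z$ nonzeros per row and per column, and bounding the spectral norm by the geometric mean of the induced $\ell_1$ and $\ell_\infty$ norms yields $\twonorm{S^*-S_t}\le z\,\infnorm{S^*-S_t}$.

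Finally I would transport the bound back to the ambient space. Writing $L^*-L_t = F^\top(W^*-W_t)F$ and expanding entrywise, $\abs{e_i^\top(L^*-L_t)e_j} = \abs{(Fe_i)^\top(W^*-W_t)(Fe_j)} \le \twonorm{Fe_i}\,\twonorm{W^*-W_t}\,\twonorm{Fe_j}$, where the incoherence Assumption~\ref{asm:incoh} bounds each column norm via $\twonorm{Fe_j}\le \sigma_{\max}(F)\,\twonorm{V_F^\top e_j}\le \mu\,\sigma_{\max}(F)\sqrt{d/n}$. Chaining the four estimates gives
\[ \infnorm{L^*-L_t} \le 2z\,\mu^2\kappa^2\,\frac{d}{n}\,\infnorm{S^*-S_t}, \]
and substituting the hypothesis $\infnorm{S^*-S_t}\le 2\mu^2\sigma_{\max}^2(F)\frac{d}{n}\frac{c_W}{5^{t-1}}$ with the sparsity budget of Assumption~\ref{asm:sps} pushes the prefactor below $1/10$, delivering the claimed $\mu^2\sigma_{\max}^2(F)\frac{d}{n}\frac{c_W}{5^{t}}$.

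I expect the main obstacle to be making the constants line up cleanly. The two pseudoinverses contribute $1/\sigma_{\min}^2(F)$ and the two incoherence reconversions contribute $\sigma_{\max}^2(F)$, so the net dependence is $\kappa^2$, and requiring a contraction by a factor of ten forces a sparsity budget of order $n/(\mu^2 d\kappa^2)$; I would verify carefully that this matches the stated assumption (noting that the $\kappa^2$ dependence is precisely what also appears in the noise assumption of Remark~\ref{rem:noisy_asm}). The other delicate point is the sparse-matrix spectral bound, which depends essentially on the support-containment guarantee $\Supp(S^*-S_t)\subseteq\Supp(S^*)$ from Lemma~\ref{lem:sparse}; without it the error matrix could not be claimed to have at most $z$ nonzeros per row and column, and the whole chain would break.
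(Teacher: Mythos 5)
Your proposal is correct and takes essentially the same route as the paper's own proof: rewrite the input of the spectral thresholding as $W^* + G^\top E_t G$ with $G = F^\dagger$ and $E_t = S^*-S_t$, bound $\twonorm{W^*-W_t} \le 2\twonorm{G^\top E_t G} \le 2\twonorm{E_t}/\sigma_{\min}^2(F)$ via Weyl's inequality, convert $\twonorm{E_t} \le z\,\infnorm{E_t}$ using the sparsity of $E_t$ (the paper cites Lemma 4 of \citep{netrapalli2014non}, you prove it directly), and transport back to the ambient infinity norm via the incoherence of $V_F$. Your two cautionary remarks are also on target: the paper's proof likewise relies on the support-containment conclusion of Lemma~\ref{lem:sparse} (not just the stated error hypothesis) to make $E_t$ row/column sparse, and the factor-of-ten contraction indeed needs $z = O\paran{n/(\mu^2 d \kappa^2)}$, whereas Assumption~\ref{asm:sps} as written carries only one power of $\kappa$ — a slack that is present in the paper's own step $\xi_7$ rather than a flaw in your argument.
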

\begin{proof}
Using the fact that $L^* = F^\top W^* F$ and $L_t = F^\top W_t F$, we have
\begin{align}
& \infnorm{L^*-L_t} = \infnorm{F^\top (W^* - W_t) F} \nn
\end{align}
\begin{align}
& = \max_{i,j} \abs{e_i^\top F^\top (W^* - W_t) F e_j} \nn \\
& \stackrel{\xi_1}{=} \max_{i,j} \abs{e_i^\top V_F \Sigma_F^\top U_F^\top (W^* - W_t) U_F \Sigma_F V_F^\top e_j} \nn \\
& \stackrel{\xi_2}{\leq} \left(\max_{i} \twonorm{e_i^\top V_F \Sigma_F^\top}\right)^2 \twonorm{U_F^\top (W^* - W_t) U_F}, \label{eqn:VW}
\end{align}
where $\xi_1$ follows by substituting the SVD of $F$, i.e., $F = U_F \Sigma_F V_F^\top$ and $\xi_2$ follows from the sub-multiplicative property of the spectral norm. Now, from Assumption \ref{asm:incoh}, we have
\begin{equation}
\label{eqn:incoh_v}
\max_i \twonorm{e_i^\top V_F \Sigma_F^\top} \leq \mu \sqrt{\frac{d}{n}} \sigma_{\max}(F).
\end{equation}
Recall from Step 7 of Algorithm~\ref{alg:incrpca} that $W_t$ is computed as $\P_r \paran{ {(F_1^\top)}^\dagger (M-S_t) (F_2)^\dagger }$ where $M = F_1^\top W^* F_2 + S^*$. Let $E_t := S^*-S_t$. Further, let $Q \Lambda Q^\top + Q_\perp \Lambda_\perp Q_\perp^\top$ be the full SVD of $W^* + G^\top E_t G$, where $Q$ and $Q_\perp$ span orthogonal sub-spaces of dimensions $r$ and $d-r$ respectively, and $G := F^\dagger$ is the pseudoinverse of $F$. 
Next, using these and the unitary invariance property of the spectral norm, we have
\begin{align}
& \twonorm{U_F^\top (W^*-W_t) U_F} \leq \twonorm{W^*-W_t} \nn \\
& \leq \twonorm{W^* - \P_r (G^\top (F^\top W^* F + E_t) G)} \nn \\
& \stackrel{\xi_3}{\leq} \twonorm{Q \Lambda Q^\top + Q_\perp \Lambda_\perp Q_\perp^\top - G^\top E_t G - Q \Lambda Q^\top} \nn \\
& \stackrel{\xi_4}{\leq} \twonorm{G^\top E_t G} + \twonorm{Q_\perp \Lambda_\perp Q_\perp^\top} \nn \\
& \stackrel{\xi_5}{\leq} 2 \twonorm{G^\top E_t G} {\leq} 2 \twonorm{G}^2 \twonorm{E_t} \nn \\
& \leq \frac{2 \twonorm{E_t}}{[\sigma_{\min}(F)]^2} \stackrel{\xi_6}{\leq} \frac{2 z \infnorm{E_t}}{[\sigma_{\min}(F)]^2}, \label{eqn:wwt}
\end{align}
where $\xi_3$ is obtained by substituting $W^* = Q \Lambda Q^\top + Q_\perp \Lambda_\perp Q_\perp^\top - G^\top E_t G$, $\xi_4$ by triangle inequality. Inequality $\xi_5$ is obtained by using Weyl's eigenvalue perturbation lemma~\citep{bhatia2013matrix}, which is:
\begin{align*}
\twonorm{Q_\perp \Lambda_\perp Q_\perp^\top} = \infnorm{\Lambda_\perp} \leq \twonorm{G^\top E_t G}.
\end{align*}
Finally, inequality $\xi_6$ is obtained by using Lemma 4 of \citep{netrapalli2014non}. Combining Equations \eqref{eqn:VW}, \eqref{eqn:incoh_v} and \eqref{eqn:wwt}, we have
\begin{equation}
\label{eqn:llt}
\infnorm{L^*-L_t} \leq 2 \mu^2 d z \kappa^2 \infnorm{E_t} / n \stackrel{\xi_7}{\leq}  \infnorm{E_t} / 10,
\end{equation}
where $\kappa = \frac{\sigma_{\max}(F)}{\sigma_{\min}(F)}$ and $\xi_7$ is due to Assumption~\ref{asm:sps}. Substituting the result $\infnorm{E_t} = \infnorm{S^*-S_{t}} \leq 2 \mu^2 \sigma_{\max}^2(F) \frac{d}{n} \frac{c_W}{5^{t-1}}$ from Lemma~\ref{lem:sparse} in Equation~\eqref{eqn:llt} completes the proof.
\end{proof}

\subsection{SYMMETRIC NOISY CASE}
\label{sec:sym_noisy}

\begin{figure*}[t]
\centering
\caption{Comparing of RPCA algorithms in terms of running time to reach a solution of a given accuracy. For $n = 1000$, we vary each problem parameter while fixing the others. Specifically, we vary:}
\begin{tabular}{cccc}
(a) sparsity & (b) rank & (c) feature dimension & (d) condition number \\
\includegraphics[width = 0.23\textwidth]{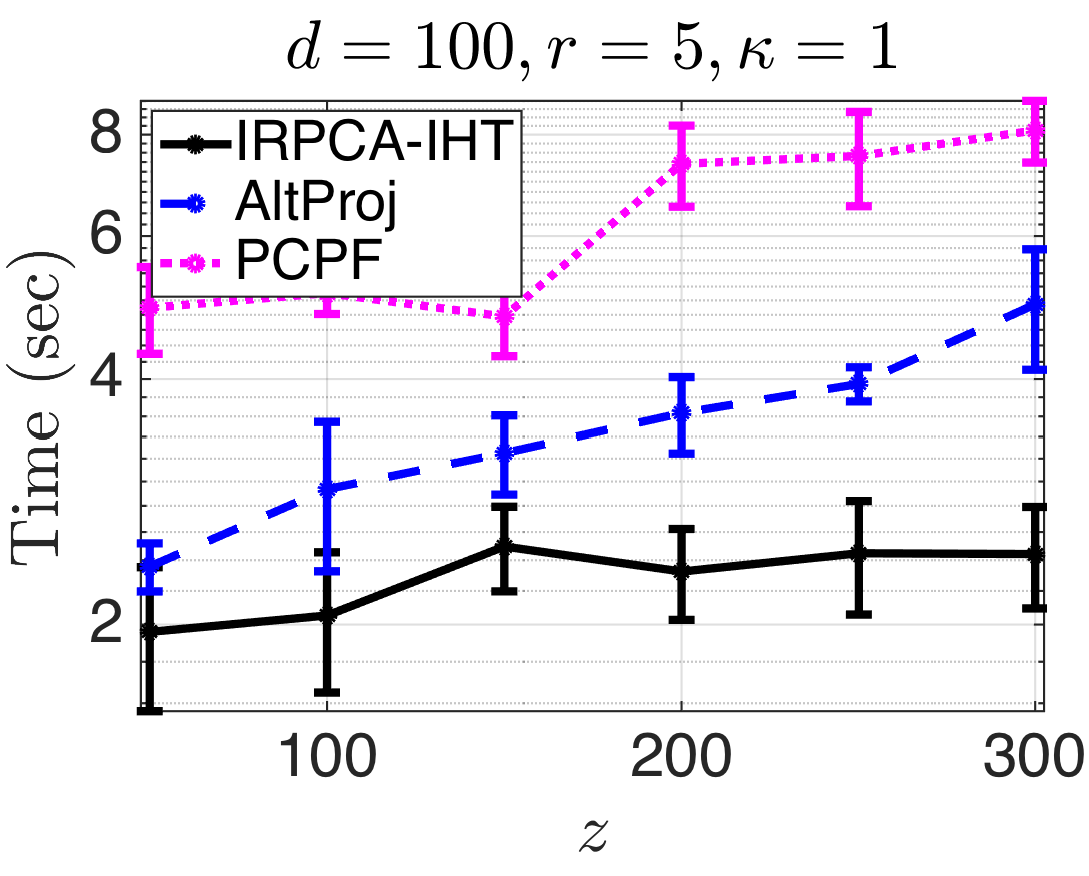} &
\includegraphics[width = 0.23\textwidth]{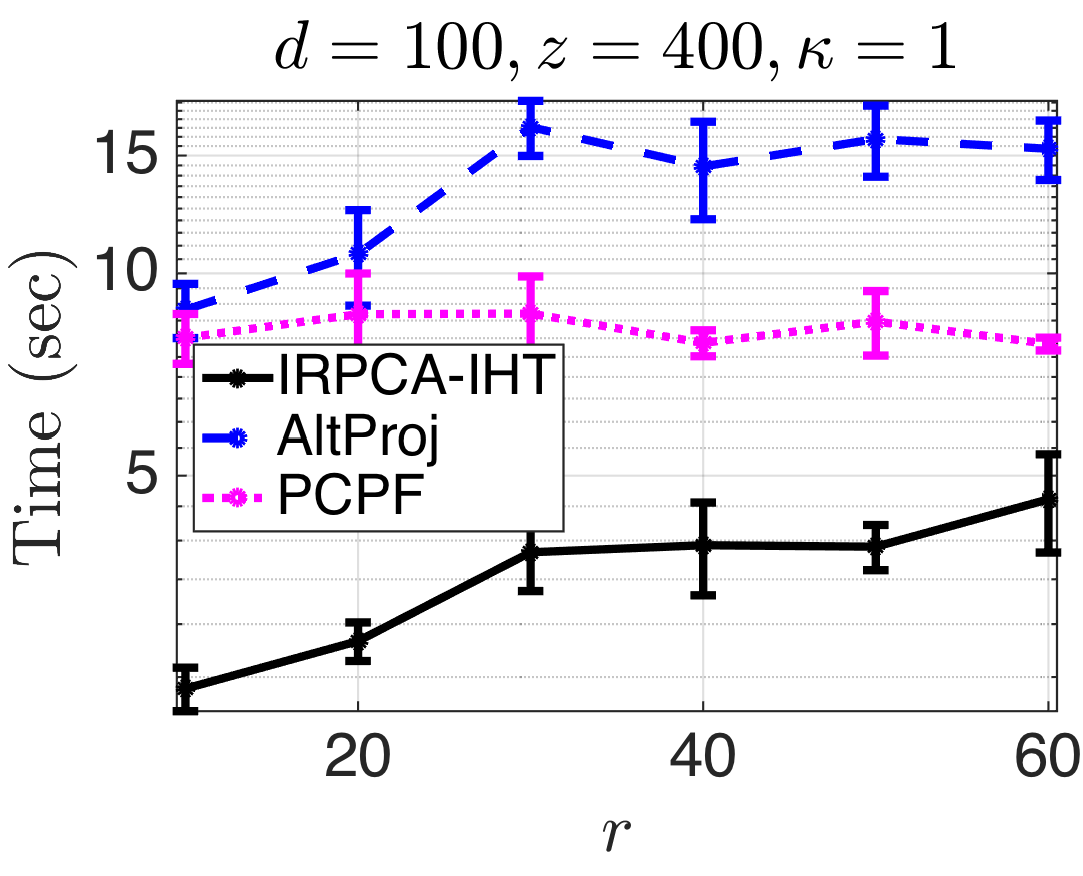} &
\includegraphics[width = 0.23\textwidth]{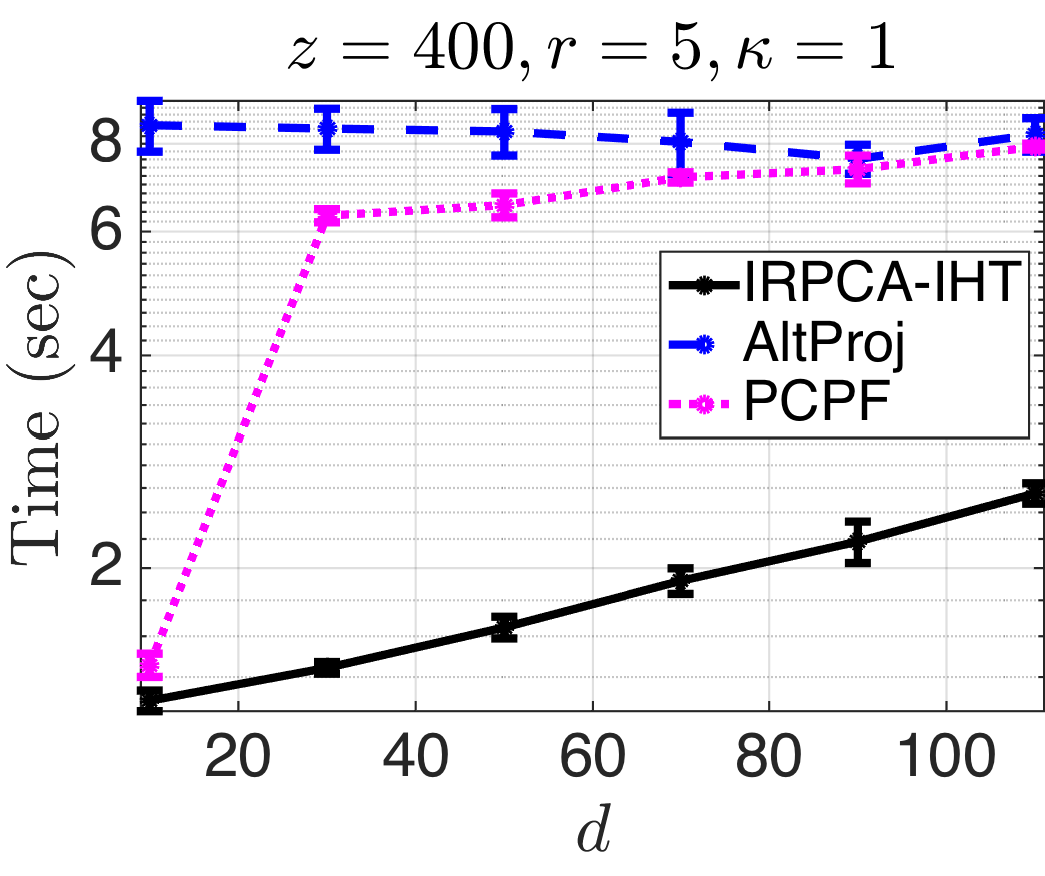} &
\includegraphics[width = 0.23\textwidth]{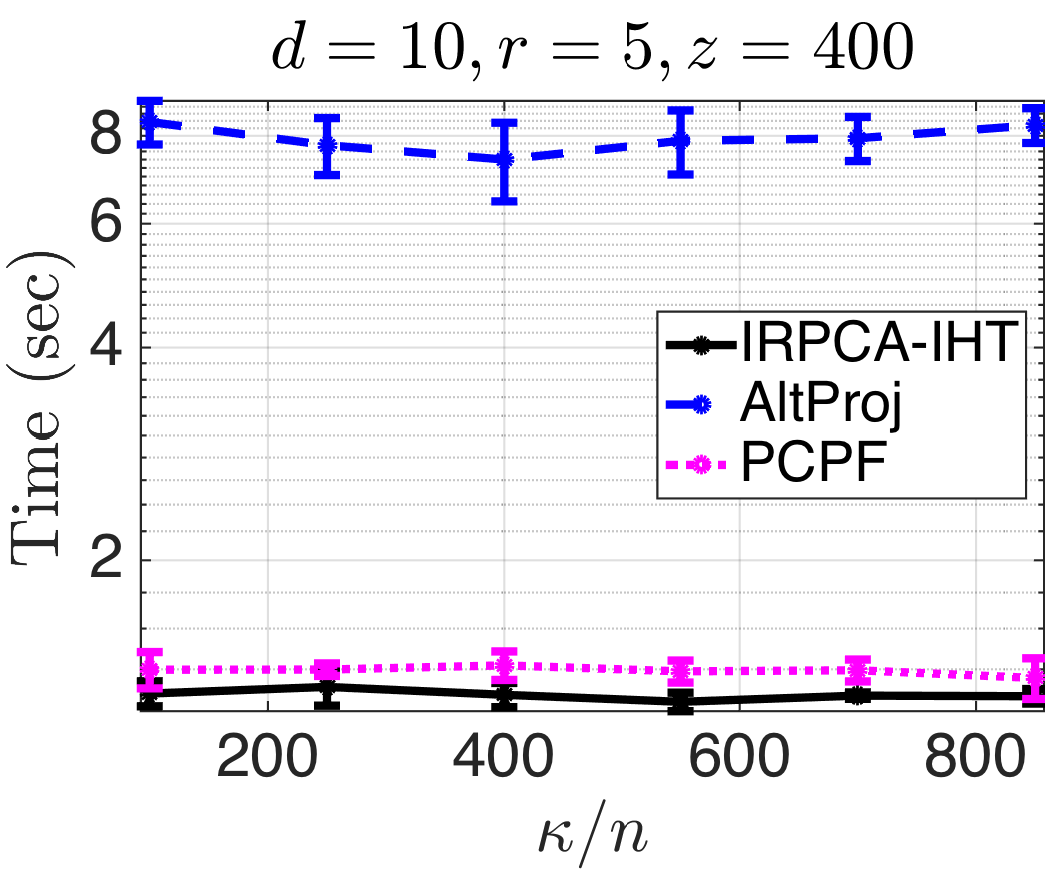}
\end{tabular}
\label{fig:syn}
\vspace*{-20pt}
\end{figure*}

Next we consider the general noisy case of $M = L^* + S^* + N^*$, where $L^* = F^\top W^* F$, $S^*$ and $N^*$ are symmetric and $N^*$ is a bounded additive noise matrix satisfying properties as given in Remark~\ref{rem:noisy_asm}. Note that, in practice, by setting $\nu = c.d$ for a suitably chosen constant $c$, Algorithm~\ref{alg:incrpca} works unchanged. However, in order to establish convergence in theory, the key challenge is to be able to control the perturbation effects of $N^*$ in each iteration. In making this precise, we now state our main result for this section whose proof is given in Appendix~\ref{sec:proofs_noisy} due to space limitations.

\begin{theorem}[\textbf{Noisy case: fast and correct convergence}]
\label{thm:sym_noisy}
Under the assumptions of Section~\ref{sec:assume}, setting $T > \lceil \log_5 (2 \mu^2 \sigma_{\max}^2(F) \frac{d}{n} \frac{c_W}{\epsilon}) \rceil + 1$ in Algorithm~\ref{alg:incrpca}, we have $\infnorm{L^*-\widehat{L}} \leq \epsilon + 3 \mu^2 d \kappa^2 \infnorm{N^*}$, $\rank(\widehat{L}) \leq r$, $\infnorm{S^*-\widehat{S}} \leq \epsilon + 8 \mu^2 d \kappa^2 \infnorm{N^*}$ and $\Supp(\widehat{S}) \subseteq \Supp(S^*)$.
\end{theorem}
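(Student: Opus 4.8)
The plan is to run the same induction over $t$ as in the proof of Theorem~\ref{thm:sym_noiseless}, but to carry a constant \emph{noise floor} alongside the geometric term at every iteration. Concretely, I would maintain the invariant $\infnorm{L^*-L_{t-1}} \le \mu^2 \sigma_{\max}^2(F) \frac{d}{n} \frac{c_W}{5^{t-1}} + b_L$, where $b_L = O(\mu^2 d \kappa^2 \infnorm{N^*})$ is a fixed additive floor, and prove noisy analogues of Lemma~\ref{lem:sparse} and Lemma~\ref{lem:decay} that reproduce the geometric contraction while adding only controlled multiples of $\infnorm{N^*}$. The base case is unchanged because $L_0 = 0$, and the rank claim $\rank(\widehat{L}) \le r$ is immediate since $\widehat{L} = F^\top \widehat{W} F$ with $\widehat{W} = W_T = \P_r(\cdot)$.

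First I would redo the entry-wise step. Now $M - L_{t-1} = (L^*-L_{t-1}) + S^* + N^*$, so the two-case argument of Lemma~\ref{lem:sparse} carries through with one extra $\infnorm{N^*}$ appearing in each entrywise estimate, giving $\infnorm{S^*-S_t} \le 2\mu^2 \sigma_{\max}^2(F) \frac{d}{n} \frac{c_W}{5^{t-1}} + b_S$ with a sparse-side floor $b_S = \nu + b_L + \infnorm{N^*}$. The delicate part is support recovery: for an index $(i,j)$ with $e_i^\top S^* e_j = 0$ we now have $e_i^\top (M-L_{t-1}) e_j = e_i^\top (L^*-L_{t-1}+N^*) e_j$, and for the threshold to zero out this entry I need $\zeta_t \ge \infnorm{L^*-L_{t-1}} + \infnorm{N^*}$, i.e. $\nu \ge b_L + \infnorm{N^*}$. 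This is exactly what forces the threshold calibration $\nu = \infnorm{N^*}(1 + 3\mu^2 d \kappa^2)$ prescribed for Algorithm~\ref{alg:incrpca}, and it is where the smallness assumption $\infnorm{N^*} \le 1/(40\mu^2 d \kappa^2)$ of Remark~\ref{rem:noisy_asm} supplies the slack needed to keep $b_L$ within $\nu - \infnorm{N^*}$.

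Next I would redo the spectral step. The matrix fed into $\P_r$ in Step~7 becomes $W^* + G^\top (E_t + N^*) G$ with $G = F^\dagger$ and $E_t = S^*-S_t$, so the Weyl-perturbation chain $\xi_3$--$\xi_5$ of Lemma~\ref{lem:decay} goes through verbatim with $E_t$ replaced by $E_t + N^*$, yielding $\twonorm{W^*-W_t} \le 2\twonorm{G}^2 (\twonorm{E_t} + \twonorm{N^*})$. The one genuinely new estimate is the spectral norm of the noise: since $N^*$ is \emph{dense} I cannot use the sparsity bound (Lemma~4 of \citep{netrapalli2014non}) that converted $\twonorm{E_t}$ into $z\infnorm{E_t}$, and must instead settle for the crude $\twonorm{N^*} \le n\infnorm{N^*}$. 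The saving grace is that, after multiplying by the incoherence factor $\mu^2 \frac{d}{n} \sigma_{\max}^2(F)$ from Equation~\eqref{eqn:incoh_v}, the $n$ cancels the $1/n$, so the dense noise contributes only $2\mu^2 d \kappa^2 \infnorm{N^*}$ to $\infnorm{L^*-L_t}$, while the $E_t$ term still contracts by the factor $1/10$ from $\xi_7$. This gives $\infnorm{L^*-L_t} \le \tfrac{1}{10}\infnorm{S^*-S_t} + 2\mu^2 d \kappa^2 \infnorm{N^*}$.

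Finally I would close the induction by solving the coupled recursion $b_L = \tfrac{1}{10} b_S + 2\mu^2 d \kappa^2 \infnorm{N^*}$ and $b_S = \nu + b_L + \infnorm{N^*}$ for its fixed point, which under the calibrated $\nu$ yields $b_L \le 3\mu^2 d \kappa^2 \infnorm{N^*}$ and $b_S \le 8\mu^2 d \kappa^2 \infnorm{N^*}$; combined with the geometric term dropping below $\epsilon$ after the stated $T$, this reproduces the two claimed error bounds, while $\Supp(\widehat{S}) \subseteq \Supp(S^*)$ is inherited from the entry-wise step. I expect the main obstacle to be the self-consistency of the support-recovery threshold: the floor $b_L$ itself feeds back into the thresholding condition, so I must verify that the \emph{fixed-point} value of $b_L$ still satisfies $b_L \le \nu - \infnorm{N^*}$ --- a tight coupling that only closes because $\infnorm{N^*}$ is assumed small. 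Everything else is a bookkeeping extension of the noiseless argument.
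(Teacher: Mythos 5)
Your proposal follows the paper's proof essentially step for step: the same induction carrying noise floors ($b_L = 3\mu^2 d \kappa^2 \infnorm{N^*}$, $b_S = 2(3\mu^2 d \kappa^2 + 1)\infnorm{N^*}$) through noisy analogues of the two lemmas (the paper's Lemmas~\ref{lem:sparsen} and \ref{lem:decayn}), the same threshold calibration $\nu = (1+3\mu^2 d\kappa^2)\infnorm{N^*}$ for the support-recovery contradiction, and the same spectral step applying Weyl's inequality to $E_t + N^*$ with $\twonorm{E_t} \le z\infnorm{E_t}$ for the sparse residual and $\twonorm{N^*} \le n \infnorm{N^*}$ for the dense noise so that the $n$ cancels the incoherence factor $1/n$. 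The only cosmetic difference is that you derive the floors as the fixed point of the coupled recursion whereas the paper fixes them in advance inside the lemma statements and verifies they are preserved; this is the same bookkeeping, and your attribution of the self-consistency to the smallness assumption on $\infnorm{N^*}$ is immaterial since all floor terms scale linearly in $\infnorm{N^*}$ and the coefficients close on their own.
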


To prove the above theorem, we need the following key lemmas whose proofs are given in Appendix~\ref{sec:proofs_noisy} as well.

\begin{lemma}[\textbf{Noisy case: faithful support recovery due to entry-wise hard thresholding}]
\label{lem:sparsen}
Let $L_{t-1}$ satisfy the error condition that $\infnorm{L^* - L_{t-1}} \leq \mu^2 \sigma_{\max}^2(F) \frac{d}{n} \frac{c_W}{5^{t-1}} + 3 \mu^2 d \kappa^2 \infnorm{N^*}$. Then, we have $\infnorm{S^*-S_{t}} \leq 2 \mu^2 \sigma_{\max}^2(F) \frac{d}{n} \frac{c_W}{5^{t-1}} + 2 (3 \mu^2 d \kappa^2 + 1) \infnorm{N^*}$ and $\Supp(S_{t}) \subseteq \Supp(S^*)$.
\end{lemma}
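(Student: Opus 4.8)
The plan is to mirror the case-by-case argument of Lemma~\ref{lem:sparse}, now carrying the additive noise term through the hard-thresholding step. Writing $M - L_{t-1} = (L^* - L_{t-1}) + S^* + N^*$ and recalling that in the noisy regime $S_t = \P_{\zeta_t}(M - L_{t-1})$ with threshold $\zeta_t = \mu^2 \sigma_{\max}^2(F) \frac{d}{n} \frac{c_W}{5^{t-1}} + \nu$ and $\nu = (1 + 3\mu^2 d \kappa^2)\infnorm{N^*}$, I would split each entry $(i,j)$ according to whether $\abs{e_i^\top (M - L_{t-1}) e_j}$ exceeds $\zeta_t$.

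In the retained case ($> \zeta_t$), the corresponding entry of $S_t$ equals the full residual entry, so $e_i^\top (S^* - S_t) e_j = -e_i^\top (L^* - L_{t-1} + N^*) e_j$; bounding by the inductive hypothesis on $\infnorm{L^* - L_{t-1}}$ together with $\infnorm{N^*}$ gives a magnitude at most $\mu^2 \sigma_{\max}^2(F) \frac{d}{n} \frac{c_W}{5^{t-1}} + (3\mu^2 d \kappa^2 + 1)\infnorm{N^*}$. In the thresholded case ($\leq \zeta_t$), the entry of $S_t$ is zero, so $\abs{e_i^\top (S^* - S_t) e_j} = \abs{e_i^\top S^* e_j}$, and a triangle inequality on $S^* = (M - L_{t-1}) - (L^* - L_{t-1}) - N^*$ yields at most $\zeta_t + \infnorm{L^* - L_{t-1}} + \infnorm{N^*}$. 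Substituting the value of $\nu$ and the inductive hypothesis collapses this to exactly the claimed $2\mu^2 \sigma_{\max}^2(F) \frac{d}{n} \frac{c_W}{5^{t-1}} + 2(3\mu^2 d \kappa^2 + 1)\infnorm{N^*}$, which dominates the retained-case bound and therefore governs $\infnorm{S^* - S_t}$.

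For the support claim $\Supp(S_t) \subseteq \Supp(S^*)$, I would argue by contradiction exactly as in Lemma~\ref{lem:sparse}: if $e_i^\top S^* e_j = 0$, then $e_i^\top (M - L_{t-1}) e_j = e_i^\top (L^* - L_{t-1} + N^*) e_j$, whose magnitude is at most $\infnorm{L^* - L_{t-1}} + \infnorm{N^*} \leq \mu^2 \sigma_{\max}^2(F) \frac{d}{n} \frac{c_W}{5^{t-1}} + \nu = \zeta_t$ by the inductive hypothesis. Since the entry-wise operator zeroes any entry of magnitude at most $\zeta_t$, this entry of $S_t$ vanishes, so $S_t$ introduces no spurious support.

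The computation is routine once $\nu$ is pinned down; the one point that genuinely requires care is the calibration of $\nu$. It must be large enough that the support-recovery inequality closes, i.e.\ that the noise plus the inductive low-rank error at an off-support entry is bounded by $\zeta_t$, yet the very same $\nu$ reappears additively in the thresholded-case bound, so an over-generous choice would inflate the final error estimate. The value $\nu = (1 + 3\mu^2 d \kappa^2)\infnorm{N^*}$ is precisely the quantity that makes the off-support residual equal to (not merely bounded by) $\zeta_t$ while keeping the propagated error at the advertised level, which is why the two noise contributions in the thresholded case combine neatly into the factor $2(3\mu^2 d \kappa^2 + 1)$.
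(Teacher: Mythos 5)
Your proposal is correct and follows essentially the same route as the paper's own proof: the identical case split on whether $\abs{e_i^\top (M - L_{t-1}) e_j}$ exceeds $\zeta_t$, the same triangle-inequality bounds carrying $N^*$ through each case, and the same contradiction argument (with $\nu = (1 + 3\mu^2 d \kappa^2)\infnorm{N^*}$ calibrated so the off-support residual is at most $\zeta_t$) for $\Supp(S_t) \subseteq \Supp(S^*)$. Your write-up is in fact slightly cleaner in the retained case, where the paper writes an equality that should properly be a triangle-inequality bound.
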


\begin{lemma}[\textbf{Noisy case: error decay due to spectral hard thresholding}]
\label{lem:decayn}
Let $S_{t}$ satisfy the error condition that $\infnorm{S^* - S_{t}} \leq 2 \mu^2 \sigma_{\max}^2(F) \frac{d}{n} \frac{c_W}{5^{t-1}} + 2 (3 \mu^2 d \kappa^2 + 1) \infnorm{N^*}$. Then, we have $\infnorm{L^*-L_t} \leq \mu^2 \sigma_{\max}^2(F) \frac{d}{n} \frac{c_W}{5^{t}} + 3 \mu^2 d \kappa^2 \infnorm{N^*}$ and $\rank(L_t) \leq r$.
\end{lemma}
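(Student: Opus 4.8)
The plan is to mirror the noiseless argument of Lemma~\ref{lem:decay} step for step, treating the additive noise $N^*$ as a second perturbation of the latent matrix alongside the sparse residual. Writing $L^* = F^\top W^* F$ and $L_t = F^\top W_t F$, I would first reduce the infinity-norm error to a spectral-norm error on the latent matrices exactly as in Lemma~\ref{lem:decay}: substituting the SVD $F = U_F \Sigma_F V_F^\top$ and using submultiplicativity gives $\infnorm{L^*-L_t} \leq \paran{\max_i \twonorm{e_i^\top V_F \Sigma_F^\top}}^2 \twonorm{W^*-W_t}$, and Assumption~\ref{asm:incoh} bounds the incoherence factor by $\mu\sqrt{d/n}\,\sigma_{\max}(F)$, so that $\infnorm{L^*-L_t} \leq \mu^2 \sigma_{\max}^2(F)\frac{d}{n}\twonorm{W^*-W_t}$.

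Next I would expose the noise term. Since now $M = F^\top W^* F + S^* + N^*$, Step 7 of Algorithm~\ref{alg:incrpca} yields $W_t = \P_r\paran{W^* + G^\top(E_t + N^*)G}$, where $G := F^\dagger$ and $E_t := S^*-S_t$ (using $G^\top F^\top = FG = I$). The Weyl perturbation argument of Lemma~\ref{lem:decay} applies verbatim with the combined perturbation $E_t + N^*$, giving $\twonorm{W^*-W_t} \leq 2\twonorm{G^\top(E_t+N^*)G}$; a triangle inequality then splits this into a sparse part and a noise part, $\twonorm{W^*-W_t} \leq 2\twonorm{G}^2\paran{\twonorm{E_t} + \twonorm{N^*}} \leq \frac{2}{\sigma_{\min}^2(F)}\paran{\twonorm{E_t}+\twonorm{N^*}}$.

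Now the two parts are controlled by different means, and this is where the main obstacle lies. The sparse residual $E_t$ inherits the support of $S^*$ --- which is exactly the support-containment guarantee $\Supp(S_t)\subseteq\Supp(S^*)$ supplied by Lemma~\ref{lem:sparsen} --- so Lemma~4 of \citep{netrapalli2014non} gives $\twonorm{E_t}\leq z\infnorm{E_t}$, exactly as in the noiseless case. The noise matrix $N^*$, by contrast, is dense, so no sparsity bound is available and the only handle I have is $\infnorm{N^*}$ from Remark~\ref{rem:noisy_asm}. The key observation is that the naive bound $\twonorm{N^*}\leq n\infnorm{N^*}$ (valid for any $n\times n$ matrix) suffices, because the seemingly dangerous factor of $n$ is exactly cancelled by the $d/n$ arising from incoherence. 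Concretely, assembling the pieces gives
\[
\infnorm{L^*-L_t} \leq \frac{2\mu^2 d z \kappa^2}{n}\infnorm{E_t} + 2\mu^2 d\kappa^2 \infnorm{N^*},
\]
and Assumption~\ref{asm:sps} makes the first coefficient at most $1/10$, recovering the same contraction factor as the noiseless proof while leaving a clean $O(\mu^2 d\kappa^2\infnorm{N^*})$ noise term.

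Finally I would close the bound by substituting the hypothesis on $\infnorm{E_t}=\infnorm{S^*-S_t}$ (the conclusion of Lemma~\ref{lem:sparsen}) into the $\frac{1}{10}\infnorm{E_t}$ term. This reproduces the geometric term $\mu^2\sigma_{\max}^2(F)\frac{d}{n}\frac{c_W}{5^t}$ and contributes an additional $\frac{1}{5}(3\mu^2 d\kappa^2+1)\infnorm{N^*}$; adding the $2\mu^2 d\kappa^2\infnorm{N^*}$ from the noise part gives a total noise coefficient $\frac{13}{5}\mu^2 d\kappa^2 + \frac{1}{5}$, which is at most $3\mu^2 d\kappa^2$ since $\mu^2 d\kappa^2\geq 1$. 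The rank claim $\rank(L_t)\leq r$ is immediate because $W_t=\P_r(\cdot)$ has rank at most $r$ and $L_t=F^\top W_t F$. I expect the only delicate point to be the bookkeeping of constants so that the noise coefficient lands below the advertised $3\mu^2 d\kappa^2$; everything else is a routine re-run of Lemma~\ref{lem:decay}.
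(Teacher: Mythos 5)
Your proposal is correct and follows essentially the same route as the paper's proof: the same SVD/incoherence reduction, the same Weyl perturbation argument applied to the combined perturbation $G^\top(E_t+N^*)G$, the same split via Lemma~4 of \citep{netrapalli2014non} for the sparse part and the crude bound $\twonorm{N^*}\leq n\infnorm{N^*}$ for the noise part, and the same final substitution of Lemma~\ref{lem:sparsen}. Your explicit constant bookkeeping at the end (the $\frac{13}{5}\mu^2 d\kappa^2+\frac{1}{5}\leq 3\mu^2 d\kappa^2$ step) is in fact more careful than what the paper writes out.
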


\subsection{ASYMMETRIC CASE}
\label{sec:asymm}
We now show how to extend our analysis for any general asymmetric matrix, both in the noiseless and the noisy inductive settings. Let $M \in \R^{n_1 \times n_2}$ be the input data matrix. The main result can be stated as:
\begin{claim}
\label{clm:asymm}
Let $M = L^* + S^* + N^*$ where $L^* = F_1^\top W^* F_2$ such that $n_1 \neq n_2$ and $d_1 \neq d_2$. Algorithm~\ref{alg:incrpca} executed on this $M$ satisfies the guarantees in Theorem~\ref{thm:sym_noiseless} (resp. Theorem~\ref{thm:sym_noisy}) for the noiseless case where $N^*=0$ (resp. noisy case where $N^*$ satisfies the properties in Remark~\ref{rem:noisy_asm}).
\end{claim}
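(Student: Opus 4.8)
The plan is to reduce the asymmetric problem to the symmetric one already handled by Theorem~\ref{thm:sym_noiseless} and Theorem~\ref{thm:sym_noisy} via the standard symmetric dilation. Given $M=L^*+S^*+N^*\in\R^{n_1\times n_2}$ with $L^*=F_1^\top W^* F_2$, I would form the $(n_1+n_2)\times(n_1+n_2)$ symmetric matrices
\[
\tilde{M}=\begin{pmatrix}0 & M\\ M^\top & 0\end{pmatrix},\quad \tilde{F}=\begin{pmatrix}F_1 & 0\\ 0 & F_2\end{pmatrix},\quad \tilde{W}^*=\begin{pmatrix}0 & W^*\\ (W^*)^\top & 0\end{pmatrix},
\]
together with the analogously dilated $\tilde{L}^*,\tilde{S}^*,\tilde{N}^*$. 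A direct block computation gives $\tilde{L}^*=\tilde{F}^\top\tilde{W}^*\tilde{F}$ and $\tilde M=\tilde L^*+\tilde S^*+\tilde N^*$, so the dilated instance is exactly of the symmetric inductive form, now with ambient dimension $\tilde n=n_1+n_2$, feature dimension $\tilde d=d_1+d_2$, and latent rank $\rank(\tilde W^*)=2r$. The result then follows by (i) checking the assumptions transfer, (ii) showing the iterates of Algorithm~\ref{alg:incrpca} run on $(M,F_1,F_2)$ with rank $r$ are exactly the off-diagonal blocks of those produced by the symmetric run on $(\tilde M,\tilde F)$ with rank $2r$, and (iii) reading off the guarantee since the dilation preserves both $\infnorm{\cdot}$ and the support.

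The second step is the structural heart of the argument, and I would prove it by induction on $t$. Since $\tilde M$ is block anti-diagonal and $L_0=0$, entry-wise hard thresholding $\tilde S_t=\P_{\zeta_t}(\tilde M-\tilde L_{t-1})$ preserves the anti-diagonal structure, with off-diagonal block $\P_{\zeta_t}(M-L_{t-1})=S_t$. Because $\tilde F=\diag(F_1,F_2)$ is block diagonal, its pseudoinverse is $\diag(F_1^\dagger,F_2^\dagger)$, so $(\tilde F^\top)^\dagger(\tilde M-\tilde S_t)\tilde F^\dagger$ is again anti-diagonal with off-diagonal block $(F_1^\top)^\dagger(M-S_t)F_2^\dagger$, precisely the matrix whose rank-$r$ truncated SVD Algorithm~\ref{alg:incrpca} computes. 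The one lemma I need here is that for a symmetric anti-diagonal matrix $\begin{pmatrix}0 & D\\ D^\top & 0\end{pmatrix}$ the eigenvalues are $\pm\sigma_i(D)$ with paired eigenvectors built from the singular vectors of $D$; consequently its best rank-$2r$ approximation is $\begin{pmatrix}0 & \P_r(D)\\ \P_r(D)^\top & 0\end{pmatrix}$. This identifies $\tilde W_t$ with the dilation of $W_t$ and hence $\tilde L_t$ with the dilation of $L_t$, closing the induction.

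For the assumption transfer I would verify each item of Section~\ref{sec:assume} for the dilated instance. Feasibility is immediate from the block structure, and the bounded-latent condition holds since $\twonorm{\tilde W^*}=\twonorm{W^*}\le c_W$. Because $\tilde F$ is block diagonal its right singular vectors are $\diag(V_{F_1},V_{F_2})$, so each row norm equals a row norm of $V_{F_1}$ or $V_{F_2}$; thus the incoherence constant $\mu_{\tilde F}$ is, up to dimension-ratio factors addressed below, $\mu=\max(\mu_{F_1},\mu_{F_2})$, and similarly $\sigma_{\max}(\tilde F),\sigma_{\min}(\tilde F)$ and hence $\kappa(\tilde F)$ are controlled by the corresponding quantities of $F_1,F_2$. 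The sparsity bound carries over with $\tilde z=\max(z_1,z_2)$, since the top block-rows of $\tilde S^*$ inherit at most $z_1$ nonzeros (the rows of $S^*$) and the bottom block-rows at most $z_2$ (the rows of $(S^*)^\top$), and symmetrically for columns; the noise bound transfers with $\infnorm{\tilde N^*}=\infnorm{N^*}$.

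The main obstacle I anticipate is purely the bookkeeping in this last step: because $d_1\neq d_2$ and $n_1\neq n_2$, the dilated ratio $\tilde d/\tilde n=(d_1+d_2)/(n_1+n_2)$ need not dominate the individual ratios $d_i/n_i$, so the incoherence and conditioning constants of $\tilde F$ may pick up factors reflecting the mismatch of the two feature scales. I would therefore have to argue that in the target regime $\max(d_1,d_2)\ll\min(n_1,n_2)$ the sparsity condition $\tilde z\le \tilde n/(20\,\mu_{\tilde F}^2\,\tilde d\,\kappa(\tilde F))$ and the noise condition of Remark~\ref{rem:noisy_asm} are implied, up to absolute constants, by the original per-row and per-column assumptions on $S^*$ and $N^*$ -- this is exactly the sense in which the rank-sparsity conditions are \emph{maintained} under symmetrization. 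Once these constants are checked, applying Theorem~\ref{thm:sym_noiseless} (resp. Theorem~\ref{thm:sym_noisy}) to the dilated instance with rank $2r$ and using $\infnorm{\tilde L^*-\tilde L_T}=\infnorm{L^*-L_T}$, $\infnorm{\tilde S^*-\tilde S_T}=\infnorm{S^*-S_T}$, together with the equivalence $\Supp(\tilde S_T)\subseteq\Supp(\tilde S^*)\Leftrightarrow\Supp(S_T)\subseteq\Supp(S^*)$, yields the claim.
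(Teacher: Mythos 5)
Your proposal follows essentially the same route as the paper's proof: apply the standard symmetric dilation, check that feasibility, incoherence (with the same dimension-ratio caveat the paper itself notes, resolved exactly when $n_2/n_1 = d_2/d_1$), sparsity with $z = \max(z_1,z_2)$, boundedness, and $\infnorm{\sym(N^*)} = \infnorm{N^*}$ all transfer, and then invoke the symmetric-case theorems at rank $2r$. If anything, your step (ii) --- the induction identifying the asymmetric iterates with the off-diagonal blocks of the symmetric run, via the $\pm\sigma_i(D)$ eigenstructure of symmetric anti-diagonal matrices --- makes explicit a correspondence the paper only asserts (``the intermediate iterates of our algorithm also have a similar form''), so your write-up is consistent with, and slightly more complete than, the paper's argument.
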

Consider the standard symmetric embedding of a matrix given by:
\begin{align*}
\sym(M) := 
\begin{pmatrix}
0 & M \\
M^\top & 0
\end{pmatrix}.
\end{align*}
\vspace*{-2pt}
With $\sym(M)$ as input, the intermediate iterates of our algorithm also have a similar form. Moreover, note that this embedding preserves the rank, incoherence and sparsity properties -- due to space constraints, these details which are needed as the key components of the proof of Claim~\ref{clm:asymm} are deferred to Appendix~\ref{apdx:asymm}.

\section{EXPERIMENTS}
\label{sec:expt}
In this section, we conduct a systematic empirical investigation of the performance of our robust subspace recovery method (IRPCA-IHT) and justify our theoretical claims in the previous sections. Specifically, the goal of this study is to show: (1) the correctness of our algorithm, (2) that informative features and feature correlations are indeed useful, and (3) that our algorithm is computationally efficient.

\subsection{SYNTHETIC SIMULATIONS}
We set the problem size as $n_1 = n_2 = n = 1000$; for simplicity, we take $d_1 = d_2 = d$, $z_1 = z_2 = z$ and $F_1 = F_2 = F$; let $\kappa$ be the condition number of the feature matrix $F$. First, we generate approximately well-conditioned weakly incoherent feature matrices by computing $F = U_{F} \Sigma_{F} V_{F}^\top$ where the entries of $U_{F} \in \R^{d \times d}$ and $V_{F} \in \R^{d \times n}$ are drawn iid from the standard normal distribution followed by row normalization, and the diagonal entries of $\Sigma_{F}$ are set to one. Next, the latent matrix $W^*$ is generated by sampling each entry independently and uniformly at random from the interval $(0,1)$, performing SVD of this sampled matrix and retaining its top $r$ singular values. The low-rank component $L^*$ is then computed as $F^\top W^* F$. Note that this also ensure the feasibility condition in Assumption~\ref{asm:feas}. Next, we generate the sparse matrix as follows. We first choose the support according to the Bernoulli sampling model, ie, each entry is chosen to be included in the support with probability $z/n$ and then its value is chosen independently and uniformly at random from $(-10r/n,-5r/n) \cup (5r/n,10r/n)$.

There are four main parameters in the problem namely, (a) the sparsity level $z$ of $S^*$, (b) the rank $r$ of $W^*$, (c) the feature dimension $d$, and (d) condition number $\kappa$ of the feature matrix $F$; we vary each of these while fixing the others. We compare the performance of our algorithm to that of two existing algorithms namely, (i) the convex relaxation approach `PCPF' due to \citep{chiang2016robust} which is a state-of-the-art robust PCA method in the inductive setting, and (ii) `AltProj' due to \citep{netrapalli2014non} which is a state-of-the-art robust PCA method in the transductive setting. We execute these algorithms until an accuracy of $\frobnorm{M-\widehat{L}-\widehat{S}} / \frobnorm{M} \leq 10^{-3}$ is achieved and time them individually. All the results presented in the running time plots in Figure~\ref{fig:syn} are obtained by averaging over five runs.

We note that our algorithm outperforms PCPF and AltProj consistently while increasing the problem hardness in three situations (Figures 1-(a), 1-(b) and 1-(c)) in terms of running time. The gain in terms of scalability of our method over the convex PCPF method is attributed to the fact that the soft thresholding operation for solving the nuclear-norm objective involves computing the partial-SVD of the intermediate iterates which could of potentially much higher rank than $r$ -- this leads to $O(d^3)$ worst-case time complexity for the SVD step in PCPF as opposed to our algorithm which has $O(d^2 r)$ worst-case complexity for spectral hard thresholding. The time gain over the transductive AltProj method is attributed to the fact that our spectral hard-thresholding is performed in the $d$-dimensional (feature) space rather than the $n$-dimensional (ambient) space; moreover, another factor that adds to the running time of AltProj is that it proceeds in stages unlike Algorithm~\ref{alg:incrpca}. An interesting point to be noted from the relatively flat plot in Figure~\ref{fig:syn}-(d) is that the condition number dependence in Assumption~\ref{asm:sps} is merely an artifact of our analysis and is not inherent to the problem; we leave tightening this bound in theory to future work.

\begin{figure}[t]
\caption{Comparison of robust PCA algorithms on the MovieLens data: running time and recovery error.}
\label{fig:mlens}
\includegraphics[width=0.45\columnwidth]{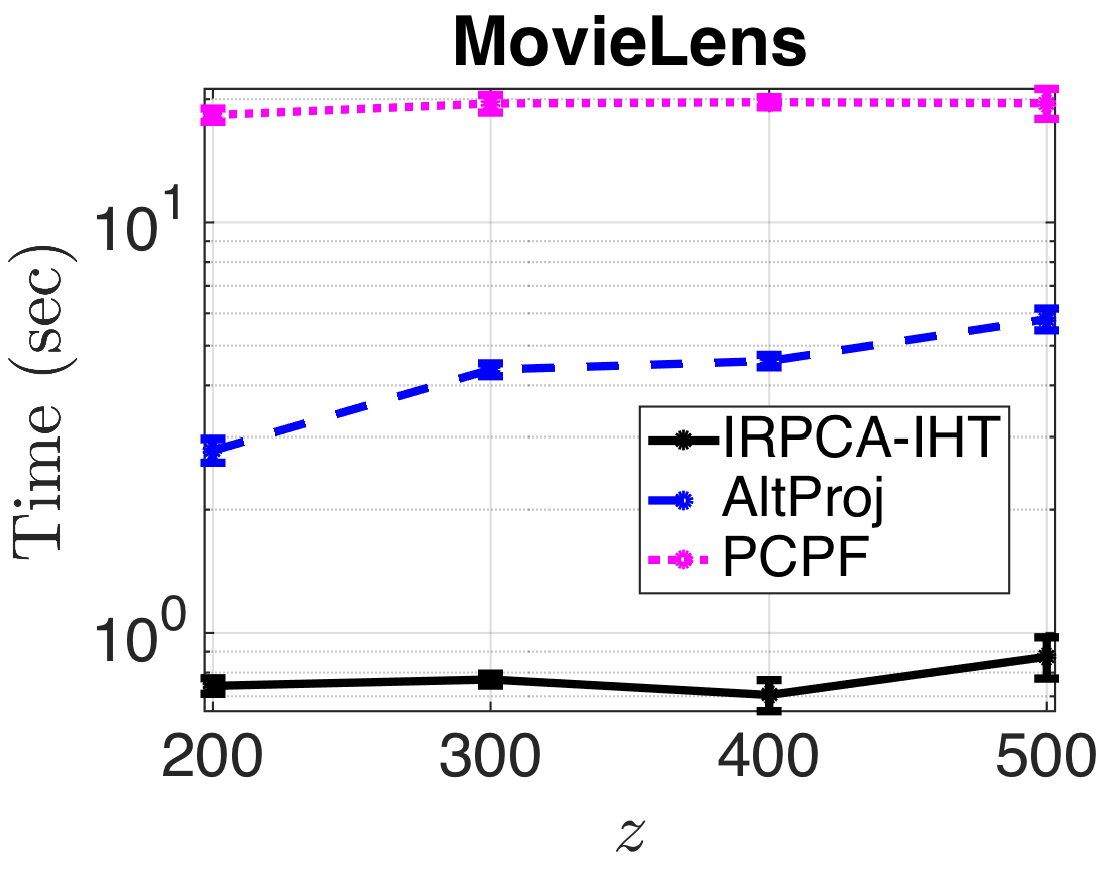}
\includegraphics[width=0.45\columnwidth]{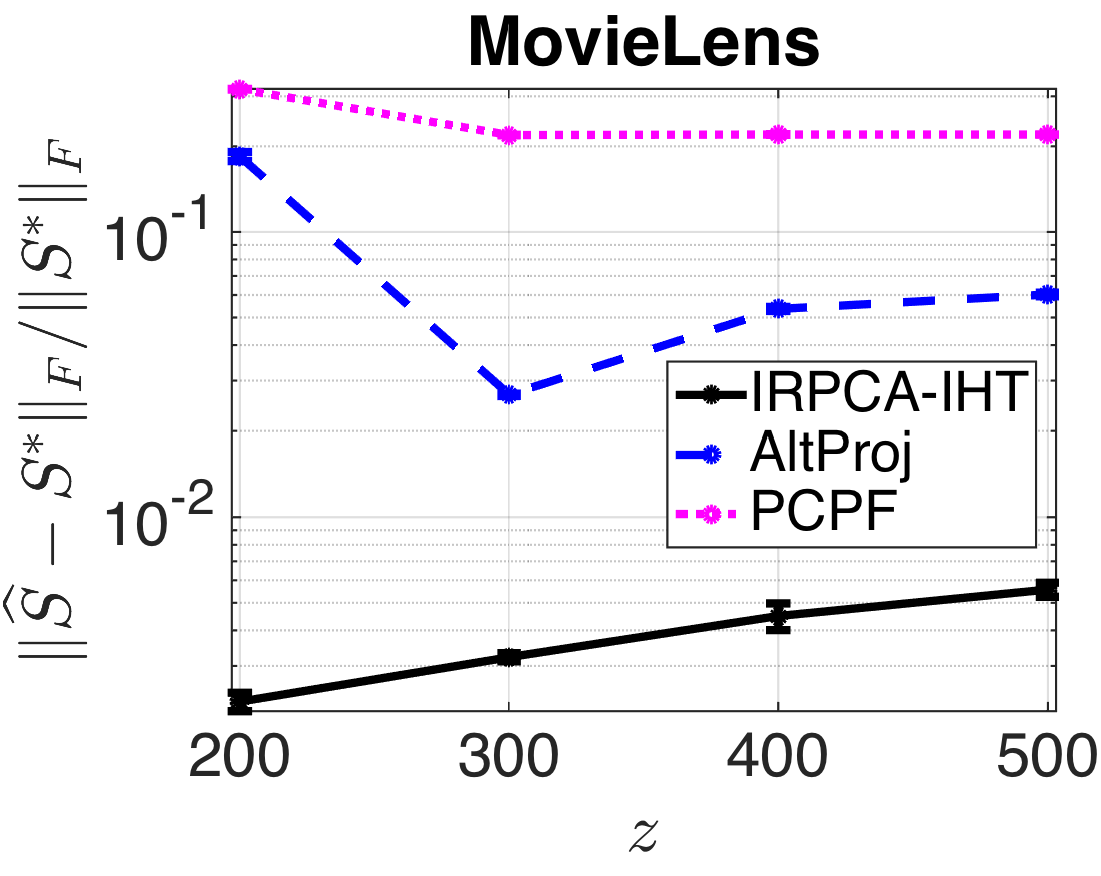}
\end{figure}

\subsection{REAL-DATA EXPERIMENTS}
As described in Example~\ref{eg:reco_sys}, we consider an important application of our method -- to robustify estimation in recommendation systems while leveraging feature information; specifically, the task is to predict user-movie ratings accurately despite the presence of gross sparse corruptions. We take the MovieLens~\footnote{\url{http://grouplens.org/datasets/movielens/}} dataset which consists of $100,000$ ratings from $n_1 = 943$ users on $n_2 = 1682$ movies. The ground-truth in this dataset is, per se, unavailable. Hence, as the first step, we apply matrix completion techniques (specifically, using the OptSpace algorithm of \citep{keshavan2010matrix}) to obtain a baseline complete user-movie ratings matrix, $L^*$; we take $r=3$. Next, we form features while ensuring the feasibility condition. For this, we compute the SVD of the baseline matrix, $L^* = U_{L^*} \Sigma_{L^*} V_{L^*}^\top$ followed by setting $F_1 = U_{L^*} Q_U$ (resp. $F_2 = V_{L^*} Q_V$) where $Q_U \in SO(d_1)$ (resp. $Q_V \in SO(d_2)$) are random rotation matrices; we take $d_1 = 20$ and $d_2 = 25$. Note that forming features using the SVD result, as we have done here, is a common technique in inductive matrix estimation problems (see, for instance, \citep{natarajan2014inductive}). We then add a sparse perturbation matrix whose each entry is chosen to be included in the support with probability $z/n$ and the entries are chosen independently and uniformly at random from $(-10r/\sqrt{n_1 n_2},-5r/\sqrt{n_1 n_2}) \cup (5r/\sqrt{n_1 n_2},10r/\sqrt{n_1 n_2})$. We compare the performance of PCPF, AltProj and our IRPCA-IHT algorithms; we consider two evaluation metrics, running time and relative recovery error (the latter is measured by $\frobnorm{\widehat{S}-S^*}/\frobnorm{S^*}$). Varying $z$ and averaging over five runs, we note that our algorithm outperforms (Figure \ref{fig:mlens}) both PCPF and AltProj by achieving about an order of magnitude of gain in terms of both the running time as well as the recovery error.

\section{CONCLUSION}
In this paper, we have presented an novel approach for inductive robust subspace identification by leveraging available informative feature information. We hope our results motivate similar studies of other learning problems in the inductive setting leading to improved statistical and computational performance. Keeping this in mind, some future directions with respect to this work include understanding the following:
\begin{enumerate}[nolistsep,noitemsep]
\item Minimax rates, both tight lower and upper bounds for learning problems in the inductive setting, are of interest. Relevant techniques include the works by \citep{negahban2012restricted} and \citep{klopp2014robust} in the transductive setting.
\item We note that the corruption rate in Assumption~\ref{asm:sps} is still sub-optimal by a factor of $d/r$ which is significant when $r \ll d$. In addition to this, removing the condition number dependence in Assumption~\ref{asm:sps} and also obtaining $\epsilon$-independent results as in matrix completion (see for instance, \citep{jain2014fast}) are of interest.
\end{enumerate}

\newpage
\appendix

\onecolumn
\begin{center}
\textbf{\Large{Appendix of \\ Provable Inductive Robust PCA via Iterative Hard Thresholding}}
\end{center}

\section{PROOFS: NOISY CASE}
\label{sec:proofs_noisy}

\subsection{Proof of Theorem~\ref{thm:sym_noisy}}
\begin{proof}
We prove this by induction over $t$. Note that Step 3 of Algorithm~\ref{alg:incrpca} initializes $\zeta_0 = 5 \mu^2 \sigma_{\max}^2(F) \frac{d}{n} c_W + \nu$ and sets $\zeta_t = \mu^2 \sigma_{\max}^2(F) \frac{d}{n} \frac{c_W}{5^{t-1}} + \nu$ for all $t \geq 1$. Let $\nu = (3 \mu^2 d \kappa^2 + 1) \infnorm{N^*}$. For $t = 1$, since $L_{0} = 0$ by our initialization, it is clear that $\infnorm{L^* - L_{0}} \leq \mu^2 \sigma_{\max}^2(F) \frac{d}{n} c_W$ and hence the base case holds. Next, for $t \geq 1$, by using Lemma~\ref{lem:sparsen}, we have $\infnorm{S^*-S_{t}} \leq 2 \mu^2 \sigma_{\max}^2(F) \frac{d}{n} \frac{c_W}{5^{t-1}}  + 2 (3 \mu^2 d \kappa^2 + 1) \infnorm{N^*}$ and further, by Lemma~\ref{lem:decayn}, we have $\infnorm{L^*-L_t} \leq \mu^2 \sigma_{\max}^2(F) \frac{d}{n} \frac{c_W}{5^{t}} + 3 \mu^2 d \kappa^2 \infnorm{N^*}$. Moreover, setting $T > \lceil \log_5 (2 \mu^2 \sigma_{\max}^2(F) \frac{d}{n} \frac{c_W}{\epsilon}) \rceil + 1$, we obtain the result.
\end{proof}

\subsection{Proof of Lemma~\ref{lem:sparsen}}
\begin{proof}
Recall that $S_{t} = \P_{\zeta_{t}} (M - L_{t-1}) = \P_{\zeta_{t}} (L^* - L_{t-1} + S^* + N^*)$. By the definition of our entry-wise hard thresholding operation, we have the following:
\begin{enumerate}[nolistsep,noitemsep]
\item Term $e_i^\top S_t e_j = e_i^\top (M-L_{t-1}) e_j = e_i^\top (L^*+S^*+N^*-L_{t-1}) e_j$ when $\abs{e_i^\top (M - L_{t-1}) e_j} > \zeta_t$. Thus, $\abs{e_i^\top (S^*-S_{t}) e_j} = \abs{e_i^\top (L^*-L_{t-1}) e_j} + \abs{e_i^\top N^* e_j} \leq \mu^2 \sigma_{\max}^2(F) \frac{d}{n} \frac{c_W}{5^{t-1}} + 3 \mu^2 d \kappa^2 \infnorm{N^*} + \infnorm{N^*}$.
\item Term $e_i^\top S_t e_j = 0$ when $\abs{e_i^\top (M - L_{t-1}) e_j} = \abs{e_i^\top (L^* + S^* + N^* - L_{t-1}) e_j} \leq \zeta_t$. Now, using the triangle inequality, we have $\abs{e_i^\top (S^*-S_{t}) e^j} = \abs{e_i^\top S^* e^j} \leq \zeta_t + \abs{e_i^\top (L^* - L_{t-1}) e_j} + \abs{e_i^\top N^* e_j} \leq 2 \paran{ \mu^2 \sigma_{\max}^2(F) \frac{d}{n} \frac{c_W}{5^{t-1}} + 3 \mu^2 d \kappa^2 \infnorm{N^*} + \infnorm{N^*} }$.
\end{enumerate}
Thus, the above two cases show the validity of the entry-wise hard thresholding operation. Next, we show that for any given $(i,j)$, if $e_i^\top S^* e_j = 0$ then $e_i^\top S_{t} e_j$ is also zero for all $t$. Noting that $M = L^* + S^* + N^*$ and $e_i^\top S^* e_j = 0$, $e_i^\top S_{t} e_j = e_i^\top (M - L_{t-1}) e_j = e_i^\top (L^* + N^* - L_{t-1}) e_j \neq 0$ iff $\abs{e_i^\top (L^* + N^* - L_{t-1}) e_j} > \zeta_t$. But this is a contradiction since $\abs{e_i^\top (L^* + N^* - L_{t-1}) e_j} \leq \abs{e_i^\top (L^* - L_{t-1}) e_j} + \abs{e_i^\top N^*e_j} \leq \mu^2 \sigma_{\max}^2(F) \frac{d}{n} \frac{c_W}{5^{t-1}} + 3 \mu^2 d \kappa^2 \infnorm{N^*} + \infnorm{N^*} = \zeta_t$.
\end{proof}

\subsection{Proof of Lemma~\ref{lem:decayn}}
\begin{proof}
Using the fact that $F_1 = F_2$, $L^* = F^\top W^* F$ and $L_t = F^\top W_t F$, we have
\begin{align}
& \infnorm{L^*-L_t}  = \infnorm{F^\top (W^* - W_t) F} \nn \\
& = \max_{i,j} \abs{e_i^\top F^\top (W^* - W_t) F e_j} \nn \\
& \stackrel{\xi_{11}}{=} \max_{i,j} \abs{e_i^\top V_F \Sigma_F^\top U_F^\top (W^* - W_t) U_F \Sigma_F V_F^\top e_j} \nn \\
& \stackrel{\xi_{12}}{\leq} \left(\max_{i} \twonorm{e_i^\top V_F \Sigma_F^\top}\right)^2 \twonorm{U_F^\top (W^* - W_t) U_F} \label{eqn:VWn}
\end{align}
where $\xi_{11}$ follows by substituting the SVD of $F = U_F \Sigma_F V_F^\top$ and $\xi_{12}$ follows from the sub-multiplicative property of the spectral norm. Similar to the proof of Lemma \ref{lem:decay}, using Assumption \ref{asm:incoh} we have:
\begin{equation}
\label{eqn:incoh_vn}
\max_i \twonorm{e_i^\top V_F \Sigma_F^\top} \leq \mu \sqrt{\frac{d}{n}} \sigma_{\max}(F).
\end{equation}
Let the residual sparse perturbation be defined as $E_t := S-S_t$. Let $Q \Lambda Q^\top + Q_\perp \Lambda_\perp Q_\perp^\top$ be the full SVD of $W^* + G^\top (E_t + N^*) G$ where $Q$ and $Q_\perp$ span orthogonal sub-spaces of dimensions $r$ and $d-r$ respectively, and $G = F^\dagger$ is the pseudoinverse. Also, recall that from Step 7 of Algorithm~\ref{alg:incrpca} that $W_t$ is computed as $\P_r \paran{ {(F_1^\top)}^\dagger (M-S_t) (F_2)^\dagger }$ where $M = F_1^\top W^* F_2 + S^* + N^*$. Using these and the unitary invariance property of the spectral norm, we have
\begin{align}
& \twonorm{U_F^\top (W^*-W_t) U_F} \leq \twonorm{W^*-W_t} \nn \\
& \leq \twonorm{W^* - \P_r (G^\top (F^\top W^* F + E_t + N^*) G)} \nn \\
& \stackrel{\xi_{13}}{\leq} \twonorm{Q \Lambda Q^\top + Q_\perp \Lambda_\perp Q_\perp^\top - G^\top (E_t+N^*) G - Q \Lambda Q^\top} \nn \\
& \stackrel{\xi_{14}}{\leq} \twonorm{G^\top (E_t+N^*) G} + \twonorm{Q_\perp \Lambda_\perp Q_\perp^\top} \nn \\
& \stackrel{\xi_{15}}{\leq} 2 \twonorm{G^\top (E_t+N^*) G} {\leq} 2 \twonorm{G}^2 \twonorm{E_t+N^*} \nn \\
& \leq \frac{2 \twonorm{E_t+N^*}}{[\sigma_{\min}(F)]^2} \stackrel{\xi_{16}}{\leq} \frac{2 z \infnorm{E_t}}{[\sigma_{\min}(F)]^2} + \frac{2 \twonorm{N^*}}{[\sigma_{\min}(F)]^2} \label{eqn:wwtn}
\end{align}
where $\xi_{13}$ is obtained by substituting $W^* = Q \Lambda Q^\top + Q_\perp \Lambda_\perp Q_\perp^\top - G^\top (E_t+N^*) G$, $\xi_{14}$ by triangle inequality, $\xi_{15}$ by using Weyl's eigenvalue perturbation lemma, ie,
\begin{align*}
\twonorm{Q_\perp \Lambda_\perp Q_\perp^\top} = \infnorm{\Lambda_\perp} \leq \twonorm{G^\top (E_t+N^*) G}
\end{align*}
and $\xi_{16}$ by using Lemma 4 of \citep{netrapalli2014non} along with triangle inequality. Now, combining Equations \eqref{eqn:VWn}, \eqref{eqn:incoh_vn} and \eqref{eqn:wwtn}, we have
\begin{align}
\infnorm{L^*-L_t} & \leq 2 \mu^2 \frac{d}{n} \kappa^2 \paran{ z \infnorm{E_t} + \twonorm{N^*} } \nn \\
& \stackrel{\xi_{17}}{\leq}  \frac{\infnorm{E_t}}{10} + 2 \mu^2 d \kappa^2 \infnorm{N^*} \label{eqn:lltn}
\end{align}
where $\xi_{17}$ follows by using Assumption~\ref{asm:sps} and the inequality that $\twonorm{N^*} \leq n \infnorm{N^*}$. Using the inequality $\infnorm{S^*-S_{t}} \leq 2 \paran{ \mu^2 \sigma_{\max}^2(F) \frac{d}{n} \frac{c_W}{5^{t-1}} + (3 \mu^2 d \kappa^2 + 1) \infnorm{N^*} }$ from Lemma~\ref{lem:sparsen} in Equation~\eqref{eqn:lltn} completes the proof.
\end{proof}

\section{PROOFS: ASYMMETRIC CASE}
\label{apdx:asymm}
\subsection{Proof of Claim~\ref{clm:asymm}}

\begin{proof}
Applying the symmetric embedding transformation to our data matrix, we get $\sym(M) = \sym(L^*)+\sym(S^*)$. Now we characterize the properties of this symmetric embedding and show that it satisfies Assumptions \ref{asm:feas}, \ref{asm:incoh} and \ref{asm:sps}. First, we have
\begin{align*}
\sym(L^*) & = \begin{pmatrix} 0 & L^* \\ {L^*}^\top & 0 \end{pmatrix} = \begin{pmatrix} 0 & F_1^\top W^* F_2 \\ F_2^\top {W^*}^\top F_1 & 0 \end{pmatrix} \\
& = \begin{pmatrix} F_1^\top & 0 \\ 0 & F_2^\top \end{pmatrix}
\begin{pmatrix} 0 & W^* \\ {W^*}^\top & 0
\end{pmatrix} \begin{pmatrix} F_1 & 0 \\ 0 & F_2 \end{pmatrix}.
\end{align*}
Thus, $\sym(L^*)$ is of the form $\widetilde{F}^\top \widetilde{W}^* \widetilde{F}$. If the SVD of $W^*$ is $U_{W^*} \Sigma_{W^*} V_{W^*}^\top$, then the eigenvalue decomposition of $\widetilde{W}^*$ is given by
\begin{align*}
& \widetilde{W}^* = \begin{pmatrix} 0 & W^* \\ {W^*}^\top & 0 \end{pmatrix} = \begin{pmatrix} 0 & U_{W^*} \Sigma_{W^*} V_{W^*}^\top \\ V_{W^*} \Sigma_{W^*}^\top U_{W^*}^\top & 0 \end{pmatrix} \\
& = \frac{1}{2}\begin{pmatrix} U_{W^*} & U_{W^*} \\ V_{W^*} & -V_{W^*} \end{pmatrix} \begin{pmatrix} \Sigma_{W^*} & 0 \\ 0 & -\Sigma_{W^*} \end{pmatrix} \begin{pmatrix} U_{W^*} & U_{W^*} \\ V_{W^*} & -V_{W^*} \end{pmatrix}^\top,
\end{align*}
implying that $\rank(\widetilde{W}^*) = 2 \cdot \rank(W^*)$. Next, let the SVDs of $F_1$ and $F_2$ be $U_{F_1} \Sigma_{F_1} V_{F_1}^\top$ and $U_{F_2} \Sigma_{F_2} V_{F_2}^\top$ respectively; also, without loss of generality, let $\sigma_{\min}(F_1) > \sigma_{\min}(F_2)$. Then, the SVD of $\widetilde{F} = U_{\widetilde{F}} \Sigma_{\widetilde{F}} V_{\widetilde{F}}^\top $ is given by
\begin{align*}
\widetilde{F} & = \begin{pmatrix} F_1 & 0 \\ 0 & F_2 \end{pmatrix} = \begin{pmatrix} U_{F_1} \Sigma_{F_1} V_{F_1}^\top & 0 \\ 0 & U_{F_2} \Sigma_{F_2} V_{F_2}^\top \end{pmatrix} \\
& = \begin{pmatrix} U_{F_1} & 0 \\ 0 & U_{F_2} \end{pmatrix} \begin{pmatrix} \Sigma_{F_1} & 0 \\ 0 & \Sigma_{F_2} \end{pmatrix} \begin{pmatrix} V_{F_1}^\top & 0 \\ 0 & V_{F_2}^\top \end{pmatrix}
\end{align*}
Now, we verify that the right singular vectors of this new feature matrix $\widetilde{F}$ satisfies weak incoherence property. Specifically, we expect that the following holds:
\begin{equation}
\label{eqn:inc1}
\max_j \twonorm{V_{\widetilde{F}} e_j} \leq \mu_{\widetilde{F}} \sqrt{\frac{d_1+d_2}{n_1+n_2}}
\end{equation}
On the other hand, we actually have
\begin{equation}
\label{eqn:inc2}
\max_j \twonorm{V e_j} \leq \max \paran{\mu_{F_1} \sqrt{\frac{d_1}{n_1}}, \mu_{F_2} \sqrt{\frac{d_2}{n_2}}}.
\end{equation}
Wlog, let $\mu_{F_1} \sqrt{d_1/n_1} > \mu_{F_2} \sqrt{d_2/n_2}$. Then, combining Equations \eqref{eqn:inc1} and \eqref{eqn:inc2}, we want $\frac{\mu_{\widetilde{F}}}{\mu_{F_1}} \leq \sqrt{\frac{1+n_2/n_1}{1+d_2/d_1}}$. In particular, when $n_2/n_1 = d_2/d_1$, the incoherence constant for $\widetilde{F}$ satisfies $\mu_{\widetilde{F}} = \mu_{F_1}$.

Next, note that $\sym(S^*)$ is also sparse; specifically, $\zinorm{S^*} \leq z$ and $\iznorm{S^*} \leq z$ where $z = \max(z_1, z_2)$.

Finally, our algorithm and guarantees hold for general matrices with noise, similar to noiseless case, due to the following observation: $\infnorm{\sym(N^*)} = \infnorm{N^*}$.
\end{proof}

\end{document}